
\documentclass{article}

\usepackage[table]{xcolor}

\usepackage{microtype}
\usepackage{graphicx}
\usepackage{subfigure}
\usepackage{booktabs} 

\usepackage{tikz}
\usepackage{hyperref}


\newcommand\eg{\emph{e.g}.} 

\newcommand\ie{\emph{i.e}.}


\usepackage[accepted]{icml2025}

\usepackage{amsmath,amsfonts}
\usepackage{amssymb}
\usepackage{mathtools}
\usepackage{amsthm}

\usepackage[capitalize,noabbrev]{cleveref}

\theoremstyle{plain}

\theoremstyle{definition}

\theoremstyle{remark}

\usepackage[textsize=tiny]{todonotes}

\definecolor{myblue}{RGB}{111, 42, 221}
\definecolor{myred}{RGB}{204, 72, 158}

\usepackage{booktabs}
\usepackage{wrapfig}
\usepackage{subfigure}
\usepackage{multirow}
\usepackage{url}
\usepackage{lipsum}

\usepackage{multirow}
\usepackage{enumitem}
\definecolor{mygrey}{RGB}{224, 224, 224}
\definecolor{mycolor1}{RGB}{254, 223, 223}
\definecolor{mycolor2}{RGB}{215, 241, 217}
\definecolor{mycolor3}{RGB}{255, 251, 195}
\definecolor{loracotostarcolor}{RGB}{238, 0, 0} 
\definecolor{loracotocolor}{RGB}{255, 165, 0} 
\definecolor{lorastarcolor}{RGB}{57, 197, 187} 
\definecolor{loracolor}{RGB}{102, 204, 255} 

\usepackage{thmtools,thm-restate}
\usepackage{array}
\usepackage{etoolbox}

\AtBeginEnvironment{tabular}{\small}   
\AtBeginEnvironment{tabularx}{\small}
\AtBeginEnvironment{longtable}{\small}

\icmltitlerunning{CoTo: A Progressive Strategy to Boost Low-Rank Adaptation}

\begin{document}

\twocolumn[
\icmltitle{Come Together, But Not Right Now: \\ A Progressive Strategy to Boost Low-Rank Adaptation}

\icmlsetsymbol{equal}{*}

\begin{icmlauthorlist}
\icmlauthor{Zhan Zhuang}{sustech,cityu}
\icmlauthor{Xiequn Wang}{sustech}
\icmlauthor{Wei Li}{sustech}
\icmlauthor{Yulong Zhang}{zju}
\icmlauthor{Qiushi Huang}{sustech,surrey}
\icmlauthor{Shuhao Chen}{sustech}\\
\icmlauthor{Xuehao Wang}{sustech}
\icmlauthor{Yanbin Wei}{sustech,hkust}
\icmlauthor{Yuhe Nie}{nyu}
\icmlauthor{Kede Ma}{cityu}
\icmlauthor{Yu Zhang}{sustech}
\icmlauthor{Ying Wei}{zju}
\end{icmlauthorlist}

\icmlaffiliation{sustech}{Southern University of Science and Technology, Shenzhen, China}
\icmlaffiliation{cityu}{City University of Hong Kong, Hong Kong SAR, China}
\icmlaffiliation{zju}{Zhejiang University, Hangzhou, China}
\icmlaffiliation{hkust}{Hong Kong University of Science and Technology, Hong Kong SAR, China}
\icmlaffiliation{surrey}{University of Surrey, Surrey, UK}
\icmlaffiliation{nyu}{New York University, New York, USA}

\icmlcorrespondingauthor{Kede Ma}{kede.ma@cityu.edu.hk}
\icmlcorrespondingauthor{Yu Zhang}{yu.zhang.ust@gmail.com}
\icmlcorrespondingauthor{Ying Wei}{ying.wei@zju.edu.cn}

\icmlkeywords{Machine Learning, ICML}

\vskip 0.3in
]

\printAffiliationsAndNotice{} 

\begin{abstract}
Low-rank adaptation (LoRA) has emerged as a leading parameter-efficient fine-tuning technique for adapting large foundation models, yet it often locks adapters into suboptimal minima near their initialization. This hampers model generalization and limits downstream operators such as adapter merging and pruning. Here, we propose CoTo\footnote{This acronym nods to the Beatles' classic song `Come Together'---\textit{but not} right now.}, a progressive training strategy that gradually increases adapters' activation probability over the course of fine-tuning. By stochastically deactivating adapters, CoTo encourages more balanced optimization and broader exploration of the loss landscape. We provide a theoretical analysis showing that CoTo promotes layer-wise dropout stability and linear mode connectivity, and we adopt a cooperative-game approach to quantify each adapter's marginal contribution. Extensive experiments demonstrate that CoTo consistently boosts single-task performance, enhances multi-task merging accuracy, improves pruning robustness, and reduces training overhead, all while remaining compatible with diverse LoRA variants. Code is available at \hyperlink{https://github.com/zwebzone/coto}{https://github.com/zwebzone/coto}.
\end{abstract}
\vspace{-0.5cm}
\section{Introduction}
Parameter-efficient fine-tuning (PEFT) has become the dominant paradigm for adapting large foundation models~\citep{radford2021learning, rombach2022high,dubey2024llama} to downstream tasks. By introducing a small number of trainable parameters, such as prompts~\citep{lester2021power}, adapters~\citep{houlsby2019parameter}, or low-rank adaptation (LoRA) modules~\citep{hu2022lora}, PEFT methods achieve rapid convergence and minimal storage overhead compared to full fine-tuning. Among these, LoRA has emerged as a particularly effective approach, reparameterizing weight updates as low-rank matrices\footnote{Throughout this paper, we also call LoRA’s trainable parameters adapters. Specifically, each adapter corresponds to all LoRA parameters within a single Transformer layer.}.

\begin{figure}[t]
\centering
\includegraphics[width=\linewidth]{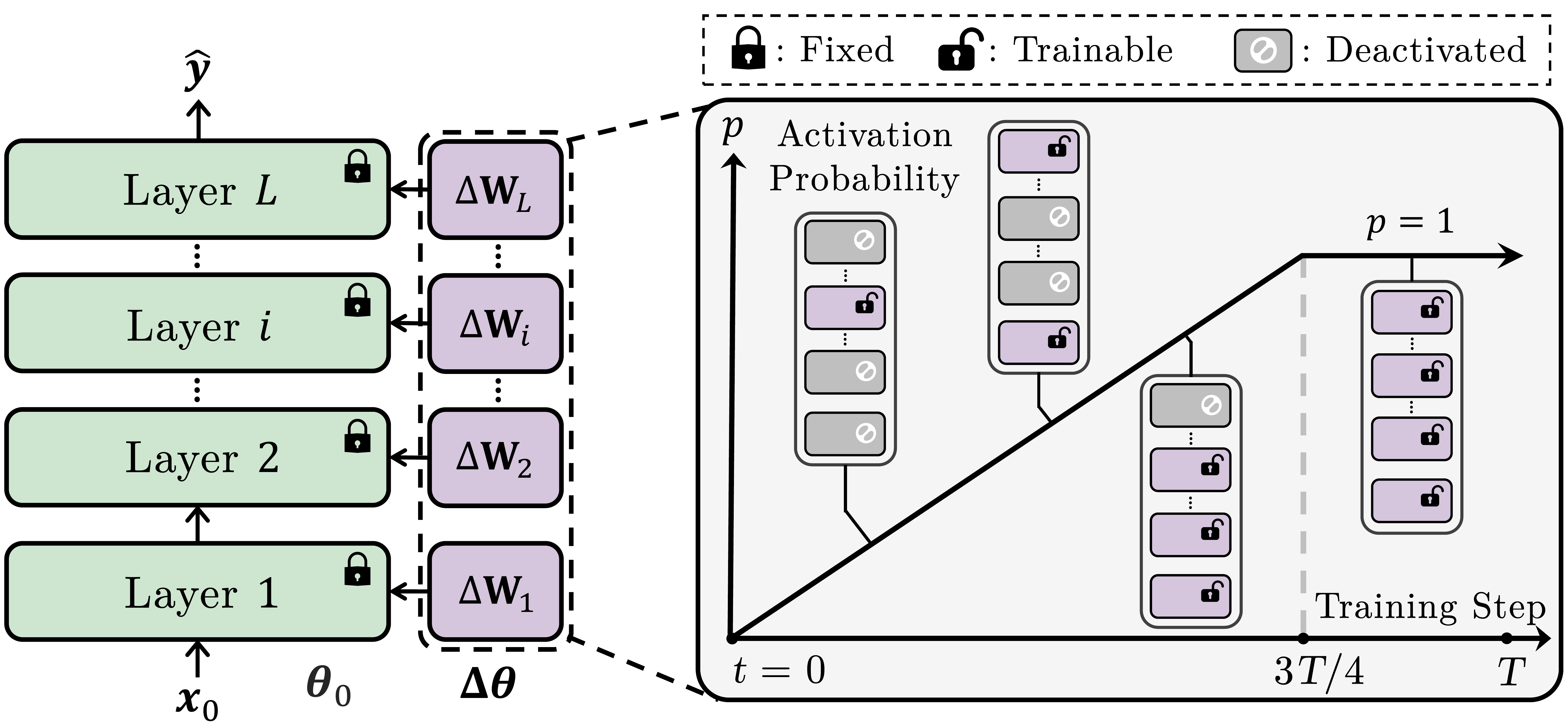}
\vskip -0.15in
\caption{
Illustration of the CoTo progressive activation schedule for LoRA adapters. For the first $75 \%$ of training (\ie, $t < 3T/4$), each adapter is stochastically deactivated (shown in gray), where the activation probability $p(t)$ increases linearly from $0$ to $1$ as training progresses. In the final $25\%$ of training, $p(t) = 1$, and all adapters remain active, reducing to full fine-tuning.}
\vskip -0.25in
\label{fig:illu2}
\end{figure}

Despite its success, vanilla LoRA often converges to suboptimal minima near their initialization, due to the ``lazy'' dynamics of standard gradient-based optimization~\citep{du2018gradient, chizat2019lazy}. Moreover, empirical studies show a pronounced layer-wise imbalance~\citep{dauphin2014identifying}: adapters in higher layers receive the bulk of the gradient signal and dominate task performance, while those in lower layers remain largely under-utilized~\citep{zhang2023adaptive, gao2024higher}. This uneven optimization not only restricts single-task generalization but also hampers downstream operations, such as adapter merging~\citep{zhao2024merging} and pruning~\citep{NIPS2015_ae0eb3ee,li2017pruning}.

To mitigate these issues, we propose CoTo, a simple progressive training strategy that gradually increases each adapter's activation probability during fine-tuning. Early in training, CoTo stochastically deactivates a random subset of adapters, forcing the model to distribute gradient updates more evenly, and then linearly raises the activation probability until all adapters participate fully. This curriculum-like scheme encourages broader exploration of the loss landscape, yields layer-wise dropout stability, and promotes linear mode connectivity (LMC) between independently trained solutions.

We provide a theoretical analysis showing that CoTo minimizes an upper bound on a weighted ensemble of subnetwork losses and, via a cooperative-game perspective, quantifies each adapter's marginal contribution using Shapley values~\citep{shapley1953value}. Empirically, CoTo consistently boosts single-task generalization, enhances multi-task adapter merging accuracy, improves adapter pruning robustness, and reduces overall training cost. Crucially, it requires no architectural changes, and integrates seamlessly with existing LoRA variants and advanced update schemes.

\section{Related Work}

\noindent{\bf Parameter-Efficient Fine-Tuning.} 
The rapid growth of foundation models has spurred extensive research into PEFT techniques, aiming to adapt large pre-trained networks to downstream tasks without incurring the computational and storage costs of full fine-tuning. Early PEFT methods introduce modular components, such as adapters~\citep{houlsby2019parameter}, prompts~\citep{lester2021power}, or prefixes~\citep{li2021prefix}, to capture task-specific knowledge while freezing the bulk of the pre-trained parameters. \citet{houlsby2019parameter} first proposed adapter layers (\ie, small bottleneck modules) inserted into Transformer blocks. Prompt-tuning~\citep{lester2021power} and prefix-tuning~\citep{li2021prefix} similarly leverage learnable tokens or continuous prefixes to steer the model toward a new task. 

LoRA~\citep{hu2022lora}, on the other hand, reframes fine-tuning as the problem of learning low-rank updates to each weight matrix. Instead of adding full-rank adapter layers, LoRA factorizes the weight update into two low-rank matrices and injects them into each Transformer layer. This decomposition dramatically reduces the number of trainable parameters while consistently delivering stronger performance than earlier PEFT methods.
Subsequent work has proposed various LoRA extensions, with the goal of improving adaptation quality, reducing parameter count further, or aligning with full fine-tuning dynamics. For example, DoRA~\citep{liudora} decomposes LoRA updates into magnitude and directional components to better approximate the full, high-dimensional updates, whereas HiRA~\citep{anonymous2024hira} applies Hadamard products between low-rank matrices and the original weights to enable high-rank adaptation without significantly increasing parameter cost. Other notable variants include LoRA-FA~\citep{zhang2023memory}, which freezes the projection-down weights for greater stability, and FourierFT~\citep{gaoparameter}, which leverages Fourier transforms to represent weight updates in the frequency domain. In parallel, adaptive rank schemes such as AdaLoRA~\citep{zhang2023adaptive}, ALoRA~\citep{liu2024alora}, and LoRA-drop~\citep{zhou2024lora}  automatically adjust the rank per layer. These variants underscore the flexibility of the low-rank paradigm but also highlight the persistent challenge of ensuring balanced, layer-wise utilization of adapters during optimization.

Beyond computational innovations, several studies have focused on improving the initialization and optimization dynamics of LoRA. PiSSA~\citep{meng2024pissa} uses truncated singular value decomposition of the pre-trained weights to initialize LoRA matrices. LoRA-GA~\citep{wang2024loraga} aligns the LoRA initialization with gradient-based approximations of full fine-tuning, while rsLoRA~\citep{kalajdzievski2023rank} adjusts scaling factors to stabilize early training. On the optimization side, LoRA+~\citep{hayoulora+} employs distinct learning rates for the two low-rank matrices. Similarly, LoRA-Pro~\citep{wang2024lorapro} modifies the gradient updates to more closely emulate the behavior of full fine-tuning. While these methods yield improvements in convergence speed or final performance, they do not explicitly address the problem of layer-wise imbalance. 
 
\noindent{\bf Model Merging.} Combining task-specific adapters to form a single set of parameters that performs well on multiple tasks relies on the property of LMC~\citep{frankle2020linear, entezarirole, zhou2023going}, which posits that two independently fine-tuned solutions often lie in loss basins connected by a low-loss linear path. In the LoRA context, LoraHub~\citep{huang2023lorahub} first demonstrates that merging low-rank adapters trained on separate language tasks can yield models with strong generalization to new tasks. Federated learning extensions, such as FedIT~\citep{zhang2024towards} and FLoRA~\citep{wang2024flora} apply LoRA merging and stacking across distributed clients, mitigating catastrophic forgetting and communication overhead. Recently, LoRA-LEGO~\citep{zhao2024merging} clusters semantically similar LoRA ``units'' within each layer before merging to reduce task interference, while ZipLoRA~\citep{shah2025ziplora} focuses on disentangling style and content subspaces to enable compositional generation in diffusion models. Despite these advances, effective multi-task merging remains challenging when adapters converge to layer-wise imbalanced minima.

\noindent{\bf Stochastic Regularization} methods, originally developed to prevent overfitting, have been adapted to the LoRA setting to encourage robustness and exploration of the parameter space. Classical techniques like Dropout~\citep{srivastava2014dropout} and DropConnect~\citep{wan2013regularization} randomly zero out elements or connections during training. Stochastic Depth~\citep{huang2016deep} and LayerDrop~\citep{Fan2020Reducing} skip entire layers with a fixed or linearly decaying probability. 
Within LoRA, entry-wise or column-wise dropout has been explored~\citep{wang2024lora, lin2024lora} to regularize low-rank matrices, but these approaches do not account for the sequential, layer-wise computation of adapters. Consequently, they may fail to correct the disproportionate updates received by higher-layer adapters. In contrast, the proposed CoTo introduces a progressive training strategy that dynamically increases the activation probability of each adapter early in training. This curriculum-like schedule balances gradient flow across all layers, fosters exploration of diverse subnetworks, and improves downstream operations.

\section{Proposed Method: CoTo}
In this section, we introduce the proposed CoTo and present two complementary perspectives to elucidate its behavior.

\subsection{Preliminaries}
Let the parameters of a pre-trained foundation model be $\boldsymbol{\theta}_0 =  \{\mathbf{W}_i\}_{i=1}^{L}$, where $\mathbf{W}_i\in \mathbb{R}^{m\times n}$ is the weight matrix of layer $i$. For an input $\boldsymbol{x}_0$, the model $f = g \circ h_L\circ\cdots\circ h_2\circ h_1$ computes a sequence of hidden features: $\boldsymbol{x}_i = h_{i}(\boldsymbol{x}_{i-1}, \mathbf{W}_{i})$ for $i\in\{1,2\ldots, L\}$, and produces the final output with the prediction head $g$: $\hat{\boldsymbol{y}} = g(\boldsymbol{x}_{L})$. In LoRA, we freeze each base weight $\mathbf{W}_{i}$ and introduce a low-rank update $\Delta \mathbf{W}_i$. Concretely, LoRA factorizes each update as $\Delta \mathbf{W}_i = \alpha \mathbf{B} \mathbf{A}$, where   $\mathbf{A} \in \mathbb{R}^{r \times n}$, $\mathbf{B} \in \mathbb{R}^{m \times r}$, and $r \ll \min(m, n)$ controls the rank.  The scaling factor $\alpha$ adjusts the magnitude of the update.

\subsection{Training Strategy} CoTo, as illustrated in Figure~\ref{fig:illu2}, introduces a simple, progressive schedule for stochastically deactivating adapters during the early stages of fine-tuning, and then gradually ``turning them on'' so that, by the final stages, all adapters participate fully. Specifically, for each layer $i$, we draw an activation indicator:
\begin{equation}
    \delta_i \sim \text{Bernoulli}(p(t)),
\end{equation}
where $p(t) \in[0,1]$ is a time-dependent probability that increases linearly from $0$ to $1$ over the first $75\%$ of training steps, and remains equal to $1$ for the remaining $25\%$. Denoting the total number of training steps by $T$, at step $t\in\{1,\ldots, T\}$, we set
\begin{equation}
    p(t) \;=\;
\begin{cases}
\dfrac{4t}{3T} & t < \dfrac{3T}{4}\\[1ex]
1 & t \ge \dfrac{3T}{4}.
\end{cases}
\end{equation}
Accordingly, the model output is adjusted to
\begin{equation}\label{coto-formula}
    \hat{\boldsymbol{y}} = f\left(\boldsymbol{x}_0; \left\{\mathbf{W}_i + \delta_i\mathbf{1} \odot\Delta \mathbf{W}_i\right\}_{i=1}^{L}\right),
\end{equation}
where $\mathbf{1}$ is an all-ones matrix of the same size as $\mathbf{W}_i$, and $\odot$ denotes the Hadamard product. The training objective is to minimize the expected loss:
\begin{equation}\label{eq:coto-objective}
    \min_{\{\Delta \mathbf{W}_i\}} \mathbb{E}_{\boldsymbol{\delta}}\left[\ell(\hat{\boldsymbol{y}}, \boldsymbol{y})\right],
\end{equation}
where $\boldsymbol{\delta} = [\delta_1,\ldots,\delta_L]^\intercal\in\{0,1\}^L$, $\boldsymbol{y}$ is the target label, and $\ell$ is typically cross-entropy loss for classification or mean squared error for regression.

\subsection{Training Dynamics}
\noindent\textbf{Curriculum of Subnetworks.} Early in training, when the probability $p$ of activating adapters is low, only a few adapters participate, forcing gradient updates to spread across layers and preventing higher‐layer adapters from dominating the loss signal; as $p$ increases, more adapters are gradually engaged, expanding the space in which the model can fine‐tune. This stochastic deactivation also counters the ``lazy'' regime---where gradients tend to stay near initialization---by encouraging exploration of a broader parameter region. By the time all adapters are active, the model has already diversified its search and is less likely to converge to poor, layer-imbalanced minima.

\noindent\textbf{Computational Savings.} Whenever $\delta_i = 0$, adapter $i$ is skipped entirely---no matrix multiplications involving $\mathbf{A}_i$ or $\mathbf{B}_i$ are performed. Thus, CoTo reduces both forward and backward computation in the early training stages. 

\begin{figure}[t]
    \centering
    \vskip -0.04in
    \includegraphics[width=0.75\linewidth]{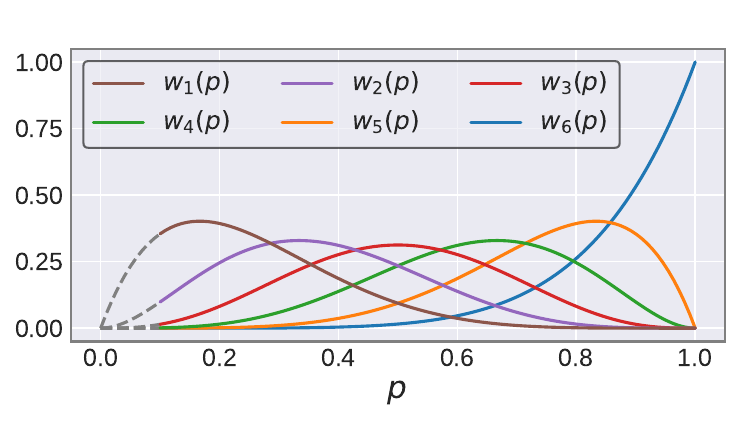}
    \vskip -0.05in
    \caption{Visualization of the weight function $w_j(p)$ in Eq.~\eqref{eq:efp}.
    }
    \label{fig:weight_function}
\end{figure}

\subsection{Progressive Optimization Perspective}
In this subsection, we view CoTo as training a weighted ensemble of partial LoRA configurations (\ie, subnetworks that omit certain adapters). This perspective makes precise how CoTo encourages both robustness to adapter dropout and improved connectivity between different minima.

Specifically, denote by 
\begin{equation}
\tilde{\boldsymbol{y}}_j = \mathbb{E}_{\|\boldsymbol{\delta}\|_1=j} \left[f\left(\boldsymbol{x}_0; \left\{\mathbf{W}_i + \delta_i \mathbf{1}\odot\Delta \mathbf{W}_i\right\}_{i=1}^{L}\right)\right],
\end{equation}
the expected model prediction over all subsets of adapters of size $j$, where $\Vert \boldsymbol{\delta}\Vert_1 = \sum_{i=1}^L \delta_i $. When $j=L$, all adapters are active and $\tilde{\boldsymbol{y}}_L$ recovers the standard LoRA model output.

\begin{table*}[t]
\vskip -0.05in
\setlength{\tabcolsep}{7pt}
\caption{Average accuracy (\%) on $11$ image classification tasks. The highest accuracy (\%) is \textbf{bolded}, while the second highest is \underline{underlined}. CLIP results are copied from~\citep{zanella2024low}. All adapters use a rank of $r=2$ with ViT-B/16 as the backbone.}
\vskip 0.1in
\label{tab:vlm_few_shot_results}
\centering
\resizebox{\textwidth}{!}{
\begin{tabular}{lcccccccccccc}
\toprule
Method & Aircraft & Caltech & Cars & DTD & EuroSAT & Flowers & Food & ImageNet & Pets & SUN & UCF & Avg \\
\midrule
CLIP 
& 24.7 & 92.9 & 65.3 & 43.6 & 47.5 & 71.4 & 86.1 & 66.7 & 89.1 & 62.6 & 66.7 & 65.1 \\ 
\midrule
LoRA {\tiny (ICLR'22)} 
& 53.89 & 96.25 & 85.12 & 72.00 & 92.05 & 97.78 & 85.15 & 73.49 & 93.27 & 76.75 & 86.72 & 82.95 \\ 
LoRA-CoTo 
& 55.69 & 96.26 & 86.04 & 72.68 & 92.97 & 98.12 & 85.48 & 73.53 & 93.42 & 76.85 & \underline{87.20} & 83.48 \\ 
\midrule
DoRA {\tiny (ICML'24)} 
& 56.21 & 96.38 & 86.67 & 72.60 & 92.13 & 98.09 & 85.04 & 73.54 & 93.59 & 76.54 & 87.15 & 83.45 \\ 
DoRA-CoTo 
& 57.35 & \underline{96.51} & 86.98 & 72.83 & \underline{93.45} & \underline{98.25} & 86.31 & 73.62 & 94.06 & 76.90 & 86.99 & 83.93 \\ 
\midrule
HiRA {\tiny (ICLR'25)} 
& \underline{57.62} & 96.35 & \underline{87.22} & \underline{73.38} & 92.51 & 98.06 & \underline{86.72} & \underline{73.76} & \underline{94.41} & \underline{76.92} & 86.81 & \underline{83.98} \\ 
HiRA-CoTo 
& \textbf{57.85} & \textbf{96.65} & \textbf{87.40} & \textbf{73.71} & \textbf{93.46} & \textbf{98.71} & \textbf{86.91} & \textbf{73.85} & \textbf{94.46} & \textbf{77.36} & \textbf{87.38} & \textbf{84.34} \\ 
\bottomrule
\end{tabular}}
\end{table*}

At iteration $t$,  CoTo samples a random vector $\boldsymbol{\delta}$, where each $\delta_i$ is $\text{Bernoulli}(p(t))$. Over this randomness, the probability that exactly $j$ adapters are active is 
\begin{equation}\label{eq:efp}
    w_j\bigl(p(t)\bigr) \!=\! \binom{L}{j}\,p(t)^j\,\bigl(1 - p(t)\bigr)^{\,L - j},  j = 0,\dots,L,
\end{equation}
as illustrated in Figure~\ref{fig:weight_function}.
\begin{restatable}{theorem}{mytheoremone}\label{mytheoremone} 
Let $\ell(\cdot, \boldsymbol{y})$ be a convex loss function. Then for any fixed $p\in[0,1]$,
\begin{equation*}
\min_{\{\Delta \mathbf{W}_i\}} \mathbb{E}_{\boldsymbol{\delta}} \left [ \ell\left(\hat{\boldsymbol{y}}, \boldsymbol{y}\right) \right ] \geq \min_{\{\Delta \mathbf{W}_i\}} \sum_{j=1}^L w_j(p) \ \ell\left(\tilde{\boldsymbol{y}}_j, \boldsymbol{y}\right).
\end{equation*}
Consequently, the expected CoTo objective at step $t$ upper-bounds a binomially weighted sum of the subnetwork losses.
\end{restatable}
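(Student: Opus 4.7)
The plan is to prove the inequality pointwise in the LoRA parameters $\{\Delta\mathbf{W}_i\}$ and then take $\min$ on both sides, invoking monotonicity. The central tools are the law of total expectation (conditioning on the Hamming weight $\|\boldsymbol{\delta}\|_1$) and Jensen's inequality applied to the convex loss $\ell(\cdot,\boldsymbol{y})$.

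First, because the $\delta_i$ are i.i.d.\ $\mathrm{Bernoulli}(p)$, the weight $\|\boldsymbol{\delta}\|_1$ follows $\mathrm{Binomial}(L,p)$, so $\Pr(\|\boldsymbol{\delta}\|_1 = j) = w_j(p)$ as in Eq.~\eqref{eq:efp}. The tower property then gives
$$\mathbb{E}_{\boldsymbol{\delta}}\bigl[\ell(\hat{\boldsymbol{y}},\boldsymbol{y})\bigr] \;=\; \sum_{j=0}^{L} w_j(p)\,\mathbb{E}\bigl[\ell(\hat{\boldsymbol{y}},\boldsymbol{y}) \,\big|\, \|\boldsymbol{\delta}\|_1 = j\bigr].$$
Second, I would apply Jensen's inequality to each conditional expectation. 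The conditional law of $\boldsymbol{\delta}$ given $\|\boldsymbol{\delta}\|_1 = j$ is uniform over the $\binom{L}{j}$ size-$j$ subsets, so by the definition of $\tilde{\boldsymbol{y}}_j$ we have $\mathbb{E}[\hat{\boldsymbol{y}} \mid \|\boldsymbol{\delta}\|_1 = j] = \tilde{\boldsymbol{y}}_j$, and convexity of $\ell$ in its first argument yields $\mathbb{E}[\ell(\hat{\boldsymbol{y}},\boldsymbol{y}) \mid \|\boldsymbol{\delta}\|_1 = j] \geq \ell(\tilde{\boldsymbol{y}}_j,\boldsymbol{y})$. Assembling these two facts produces the pointwise bound $\mathbb{E}_{\boldsymbol{\delta}}[\ell(\hat{\boldsymbol{y}},\boldsymbol{y})] \geq \sum_{j=0}^{L} w_j(p)\,\ell(\tilde{\boldsymbol{y}}_j,\boldsymbol{y})$.

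To align with the sum starting at $j=1$ in the theorem statement, I would observe that $\tilde{\boldsymbol{y}}_0$ is the frozen base-model prediction and is independent of $\{\Delta\mathbf{W}_i\}$; for the standard losses of interest (cross-entropy, MSE) one has $w_0(p)\,\ell(\tilde{\boldsymbol{y}}_0,\boldsymbol{y}) \geq 0$, so dropping this term only loosens the bound. Taking $\min_{\{\Delta\mathbf{W}_i\}}$ of both sides then preserves the inequality by monotonicity of the minimum, giving the claim.

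The most delicate step is the Jensen argument: it hinges on exchangeability of the $\delta_i$'s making the conditional distribution uniform over the size-$j$ subsets, which is precisely what ensures the conditional mean of $\hat{\boldsymbol{y}}$ equals the $\tilde{\boldsymbol{y}}_j$ defined in the excerpt. Everything else is routine bookkeeping via the binomial pmf and the tower property, so no additional machinery beyond convexity is required.
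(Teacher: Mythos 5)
Your proof is correct and follows essentially the same route as the paper's: condition on the Hamming weight $\|\boldsymbol{\delta}\|_1$ to get the binomial weights $w_j(p)$, then apply Jensen's inequality to the convex loss, and finally pass to the minimum on both sides. Your explicit handling of the $j=0$ term (which the paper's sum $\sum_j$ leaves implicit, while the theorem states $\sum_{j=1}^L$) and of the monotonicity of $\min$ is, if anything, slightly more careful than the paper's own write-up.
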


Theorem~\ref{mytheoremone} follows directly from applying Jensen's inequality to the convex loss $\ell(\cdot;\boldsymbol{y})$. A detailed proof is given in Appendix~\ref{sec:appendix-theory}.

Because CoTo’s training objective accounts for all possible choices of active adapters (weighted by $w_j(p)$), the model is explicitly encouraged to perform well even if any subset of adapters is disabled. Prior work~\citep{frankle2020linear,adilova2024layerwise} shows that dropout stability often implies that independently trained solutions can be connected by a low‐loss linear path. Intuitively, because CoTo trains adapters in near‐isolation (for low $p$) before gradually re‐enabling them, each adapter’s parameters learn a solution ``locally,'' reducing inter‐adapter dependencies. Consequently, two CoTo-trained models with different random seeds tend to lie in loss valleys that are linearly connected. Empirical verification appears in Section~\ref{sec:e-lmc}.

\subsection{Cooperative-Game Perspective}\label{sec:coopereative}
An alternative way to understand CoTo is through the lens of 
 a \textit{cooperative game}: each adapter is treated as a ``player'' in a game whose ``value function'' is the model performance when that subset of adapters is active. By attributing the \textit{marginal contribution} of each adapter to overall performance, we identify precisely how CoTo encourages balanced layer‐wise optimization.

 Let $\mathcal{S} = \{1, \ldots, L\}$ index the set of $L$ adapters (one per player). For any subset $\mathcal{R}\subset \mathcal{S}$, define the value function:
 \begin{equation}\label{eq:vaf}
    \!\!\!v(\mathcal{R})
\!=\!\mathbb{E}_{\boldsymbol{x}}\!\left[\,
  \ell\left(f\left(\boldsymbol{x};\left\{\,\mathbf{W}_i + \delta_i\mathbf{1}\odot\Delta \mathbf{W}_i\right\}_{i=1}^L\right)\!,\boldsymbol{y}\right)
\right],\!
 \end{equation}
where $\delta_i = 1$ if $i\in\mathcal{R}$ and $0$ otherwise. Under this interpretation, the Shapley value~\citep{shapley1953value} of adapter $i$ can be approximated efficiently using the multilinear extension approach~\citep{owen1972multilinear}:
\begin{equation*}
\varphi_i(v)=\int_0^1 c_i(p) d p, \ c_i(p)=\mathbb{E}\left[v\left(\mathcal{R}_{i} \cup \{i\}\right)-v\left(\mathcal{R}_{i}\right)\right],
\end{equation*}
where $\mathcal{R}_i$ indexes a random subset of adapters excluding adapter $i$, and $c_i(p)$ captures the expected \textit{marginal contribution} of  adapter $i$ when selected with probability $p$. Therefore, to estimate $\varphi_i(v)$, one may sample a few values of $p$, draw random subsets $\mathcal{R}_i$, compute the difference $v\left(\mathcal{R}_{i} \cup \{i\}\right)-v\left(\mathcal{R}_{i}\right)$ (again approximated by averages over a finite set of samples in Eq.~\eqref{eq:vaf}), and average appropriately. By inspecting $\varphi_i(v)$ after CoTo training, we gain insights into each adapter's marginal contribution.

\begin{table*}[!t]
\centering
\vskip -0.05in
\caption{Average accuracy (\%) on $8$ commonsense reasoning tasks~\citep{hu2023llm}  using LLaMA-2-7B and LLaMA-3-8B backbones. All adapters use a rank of $r=32$. Results without CoTo are copied from~\citep{anonymous2024hira}.}
\setlength{\tabcolsep}{9.5pt}
\vskip 0.08in
\resizebox{\textwidth}{!}{%
\begin{tabular}{llccccccccc}
\toprule
Model & Method & ARC-c & ARC-e & BoolQ & HellaS & OBQA & PIQA & SIQA & WinoG & Avg \\ \midrule
ChatGPT & N/A &
79.90 & 89.80 & 73.10 & 78.50 & 74.80 & 85.40 & 68.50 & 66.10 & 77.01 \\ \midrule
\multirow{6}{*}{LLaMA-2-7B}
& LoRA &
64.70 & 79.80 & 69.80 & 83.60 & 81.00 & 79.90 & 79.50 & 82.60 & 77.61 \\
& LoRA-CoTo &
69.58 & 85.33 & 71.48 & \textbf{91.15} & 82.10 & 82.89 & 78.94 & 83.54 & 80.63 \\
& DoRA &
68.20 & 83.70 & 71.80 & 89.10 & 82.40 & 83.70 & 76.00 & 82.60 & 79.69 \\
& DoRA-CoTo &
69.51 & 85.08 & \textbf{72.25} & \underline{90.82} & 81.33 & 83.10 & 79.50 & 83.43 & 80.64 \\
& HiRA &
\underline{73.81} & \underline{86.74} & 71.22 & 88.12 & \textbf{84.60} & \underline{83.35} & \underline{79.53} & \underline{83.98} & \underline{81.42} \\
& HiRA-CoTo &
\textbf{74.49} & \textbf{87.08} & \underline{72.11} & 88.40 & \underline{84.00} & \textbf{84.33} & \textbf{79.89} & \textbf{85.24} & \textbf{81.94} \\ \midrule
\multirow{6}{*}{LLaMA-3-8B}
& LoRA &
71.20 & 84.20 & 70.80 & 91.70 & 79.00 & 85.20 & 79.90 & 84.30 & 80.79 \\
& LoRA-CoTo &
79.35 & 90.81 & 75.02 & 94.77 & 85.20 & 88.39 & 80.55 & 86.08 & 85.02 \\
& DoRA &
80.40 & 90.50 & 74.60 & 95.50 & 85.80 & 89.30 & 79.90 & 85.60 & 85.20 \\
& DoRA-CoTo &
79.38 & \underline{91.50} & \textbf{75.40} & \textbf{95.98} & 86.00 & 88.52 & 81.12 & 86.00 & 85.49 \\
& HiRA &
\underline{82.90} & \textbf{93.27} & \textbf{75.40} & 95.36 & \underline{88.32} & \underline{89.70} & \underline{81.15} & \underline{87.70} & \underline{86.72} \\
& HiRA-CoTo &
\textbf{83.36} & \textbf{93.27} & \underline{75.32} & \underline{95.42} & \textbf{88.40} & \textbf{90.15} & \textbf{81.99} & \textbf{88.08} & \textbf{87.00} \\
\bottomrule
\end{tabular}}
\vskip -0.05in
\label{tab:llm_commensence}
\end{table*}

\section{Experiments}
We evaluate CoTo's effectiveness through a series of experiments designed to answer three key questions: 1) Can CoTo improve single-task generalization across diverse benchmarks? 2) Does CoTo facilitate LMC for more effective model merging? 3) Can CoTo enhance pruning robustness? All experiments use three random seeds to ensure statistical reliability, and implementation details are deferred to Appendix~\ref{sec:appendix-training-details}.

\subsection{Single-Task Generalization}\label{sec:e-benchmark}

\noindent\textbf{Results on Vision Benchmarks.}
To assess CoTo's impact in the vision domain, we follow the CLIP‐LoRA setup~\citep{zanella2024low}, fine-tuning the ViT-B/16 backbone on $11$ image classification datasets assembled by~\citet{zhou2022learning}. Each dataset defines an independent few-shot task with $16$ training images per class. We compare three LoRA variants---vanilla LoRA, DoRA~\citep{liudora}, and HiRA~\citep{anonymous2024hira}---both with and without  CoTo. Table~\ref{tab:vlm_few_shot_results} reports average accuracies over three seeds. CoTo yields noticeable performance gains across all LoRA variants, demonstrating that progressive training leads to more balanced utilization of all adapters.

\begin{table}[!t]
\centering
\setlength{\tabcolsep}{9pt}
\vskip -0.1in
\caption{Average accuracy (\%) on mathematical reasoning tasks~\citep{cobbe2021training} using the LLaMA-2-7B backbone. All adapters use a rank of $r=8$. Results {without CoTo} are copied from~\citep{wang2024lorapro}.}
\vskip 0.08in
\resizebox{\linewidth}{!}{
\begin{tabular}{lccc}
\toprule
{Method} & {Params} (\%) & w/o CoTo  & w/ CoTo  \\ \midrule
LoRA  & 0.296 &42.08 \small{± 0.04} & 55.85 \small{± 0.74} \\
DoRA  & 0.316 &53.07 \small{± 0.75} & 56.56 \small{± 0.19} \\
HiRA  & 0.296 &  54.51 \small{± 0.59} & 56.68 \small{± 0.09} \\
\midrule
PiSSA & 0.296 &44.54 \small{± 0.27} & 50.16 \small{± 0.47}\\
rsLoRA & 0.296 & 45.62 \small{± 0.10}  &56.99 \small{± 0.66} \\
LoRA+ & 0.296 &  52.11 \small{± 0.62} & 54.36 \small{± 0.43}\\
LoRA-Pro & 0.296 & 54.23 \small{± 0.79} & 57.16 \small{± 0.38} \\
\bottomrule
\end{tabular}
}
\vskip -0.12in
\label{tab:llm_mathcode}
\end{table}

\noindent\textbf{Results on Language Benchmarks.} In the language domain, we first evaluate CoTo on commonsense reasoning tasks using LLaMA backbones~\citep{touvron2023llama, dubey2024llama}. Following~\citet{wang2024lorapro}, we fine-tune LLaMA-2-7B and LLaMA-3-8B on the Commonsense170K suite~\citep{hu2023llm}, which comprises $8$ tasks. All adapter variants use a rank of $r=32$. Table~\ref{tab:llm_commensence} reports average accuracies, from which we observe consistent performance improvements across different backbones, LoRA variants, and task complexities. These results indicate that CoTo's progressive activation schedule helps mitigate layer‐wise imbalance and lazy convergence, especially as model capacity increases.

We further evaluate CoTo on mathematical reasoning tasks by fine-tuning LLaMA-2-7B on MetaMathQA~\citep{yumetamath} and testing it on GSM8K~\citep{cobbe2021training}. In this setting, we adjust the adapter rank to $8$ and compare CoTo against  PiSSA~\citep{meng2024pissa}, rsLoRA~\citep{kalajdzievski2023rank}, LoRA+~\citep{hayoulora+}, and LoRA-Pro~\citep{wang2024lorapro}. Similar performance gains have been achieved, as shown in Table~\ref{tab:llm_mathcode}. 

\subsection{Single-Task and Multi-Task Model Merging}\label{sec:e-lmc}

\begin{figure}[t]
    \centering
    \includegraphics[width=0.95\linewidth]{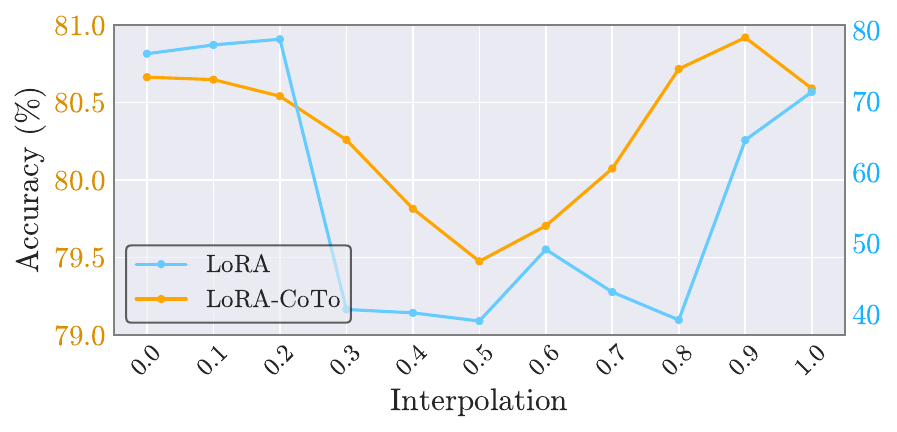}
    \vskip -0.14in
    \caption{Linear interpolation accuracy on commonsense reasoning tasks~\citep{hu2023llm}. CoTo's interpolation curve averaged across $8$ tasks (orange) remains flatter and higher compared to vanilla LoRA (blue), demonstrating superior LMC.}
    \vskip -0.15in
    \label{fig:enter-label2}
\end{figure}

\begin{figure*}[t]
\centering
\includegraphics[width=0.99\linewidth]{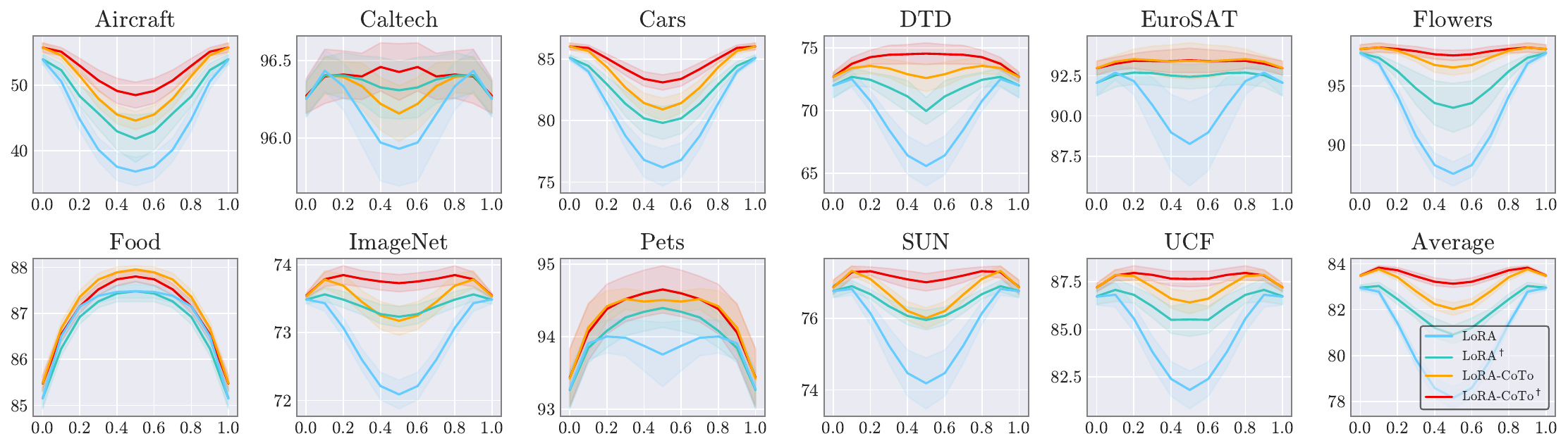}
\vskip -0.05in
\caption{Linear interpolation accuracy on $11$ image classification tasks. CoTo (orange) consistently outperforms vanilla LoRA (blue). Adding alignment (denoted by $^\dagger$) improves both but preserves CoTo's margin.}
\vskip -0.05in
\label{fig:loramerge}
\end{figure*}

In this subsection, we assess the LMC property through two complementary experiments.

\noindent\textbf{Single-Task Model Merging.} To quantify how well two independently trained LoRA solutions can be connected by a linear path, we fine-tune two instances on the same task from different random seeds, and then linearly interpolate their adapter parameters for interpolation ratios $\lambda \in [0, 1]$.  Figure~\ref{fig:enter-label2} illustrates interpolation accuracies on commonsense reasoning tasks~\citep{hu2023llm}: at $\lambda = 0.5$ (equal mixture), LoRA-CoTo maintains $79\%$ accuracy, whereas vanilla LoRA drops to $39\%$. Across the entire interpolation trajectory, CoTo’s curve remains substantially flatter and higher, indicating that independently CoTo-trained adapters lie in closer, low-loss basins. Analogous trends emerge in the image classification experiments (see Figure~\ref{fig:loramerge}). Even after applying an additional ``alignment'' step: learning an invertible matrix $\mathbf{P}$ to minimize $\lVert\Delta \mathbf{W}_f- \Delta \mathbf{W}_e\rVert_2$, where $\Delta\mathbf{W}_f$ and $\Delta\mathbf{W}_e$ denote linear weight fusion and model ensemble, respectively, CoTo retains its advantages over vanilla LoRA\footnote{Linear weight fusion is computed by $\Delta \mathbf{W}_f= (\lambda \mathbf{B}_1 + (1-\lambda) \mathbf{B}_2)(\lambda \mathbf{A}_1 + (1-\lambda) \mathbf{A}_2 )$, which preserves the rank, while linear model ensemble~\citep{zhao2024loraretriever} is computed by $\Delta \mathbf{W}_e  = \lambda \mathbf{B}_1\mathbf{A}_1 + (1-\lambda) \mathbf{B}_2\mathbf{A}_2$. To insert the learnable invertible matrix $\mathbf{P}$, we reparameterize $\Delta \mathbf{W}_2 = \mathbf{B}_{2}\mathbf{A}_{2}= (\mathbf{B}_{2} \mathbf{P}) (\mathbf{P}^{-1} \mathbf{A}_{2})$.}. Analysis of $\lVert\Delta \mathbf{W}_f - \Delta \mathbf{W}_e\rVert_2$ and  $\lVert \mathbf{P}\rVert_2$ in Figure~\ref{fig:p-analysis} further confirms that CoTo's performance gains stem from balanced layer-wise optimization rather than mere post-hoc alignment.

\begin{table*}[t]
    \centering
    \vskip -0.05in
    \caption{Average accuracy (\%) on multi-task merging for $9$ generative language understanding tasks~\citep{wang2019glue} using LLaMA-2-7B and LLaMA-2-13B backbones. ``Fusion'' and ``Ensemble'' correspond to the linear weight fusion and linear model ensemble that compute $\Delta \mathbf{W}_f$ and $\Delta \mathbf{W}_e$, respectively. Results without CoTo are copied from~\citep{zhao2024merging}.
    }
    \vskip 0.08in
    \resizebox{\linewidth}{!}{
    \begin{tabular}{llcccccccccc}
    \toprule
    \multirow{2}{*}{Model} & \multirow{2}{*}{Method}  & \multicolumn{7}{c}{In-Domain Task} & \multicolumn{2}{c}{Out-of-Domain Task} &  \multirow{2}{*}{Avg}\\
    \cmidrule(lr){3-9}
    \cmidrule(lr){10-11}
     &  & CoLA & MNLI & MRPC & QNLI & QQP & RTE & SST2 & SNLI & WNLI & \\
    \midrule
    \multirow{7}{*}{LLaMA-2-7B} &  Task-Specific LoRA & 61.63 & 77.46 & 68.00 & 77.25 & 75.83 & 52.22 & 75.74 & -- & -- & -- \\
    \cmidrule(lr){2-12}
    & Fusion & 54.42 & 36.09 & \underline{68.00} & 44.41 & 51.72 &  48.15 & 42.99 & 31.64 & 47.14 & 47.17 \\
    & Fusion-CoTo & \textbf{57.31} & 47.39 & 61.75 & \underline{62.60} & \underline{71.89} & 71.11 & 60.80 & 36.79 & \underline{57.14} & 58.53 \\
    &Ensemble & 55.67 & 45.89 & 59.25 & 59.84 & 67.38 & 68.89 & 66.44 & 36.73 & 51.43 & 56.84 \\
     &Ensemble-CoTo & \underline{57.21} & 45.68 & 47.75 & 61.39 & 68.59 & 69.26 & 60.57 & 35.24 & \textbf{64.29} & 56.66 \\
    & LoRA-LEGO & 55.48 & \underline{55.73} & 66.00 & 62.29 & 71.07 & \textbf{71.85} & \underline{73.22} & \underline{51.36} & 52.86 & \underline{62.21} \\
   & LoRA-LEGO-CoTo &  53.94 & \textbf{64.35} & \textbf{72.25} & \textbf{72.71} & \textbf{78.51} & \underline{71.48} & \textbf{75.75} & \textbf{58.59} & \underline{57.14} & \textbf{67.19} \\
    \midrule
   \multirow{7}{*}{LLaMA-2-13B} &  Task-Specific LoRA & 69.04 & 88.23 & 89.25 & 82.33 & 86.29 & 80.74 & 76.44 & -- & -- & --\\
       \cmidrule(lr){2-12}
   & Fusion & 45.48 &46.32 & 67.75 & 46.68 & 47.50 & 62.96 & 46.78 & 42.42 & 42.86 & 49.86 \\
    & Fusion-CoTo & \textbf{64.52} & 57.82 & 73.75 & 66.10 & 78.53 & 75.93 & 75.52 & 42.28 & \textbf{67.14} & 66.84
\\
    &Ensemble & 62.50 & 64.64 & 74.75& 71.81 &81.35 &\textbf{79.26} & 75.52 &54.32&60.00 &69.35 \\
    &Ensemble-CoTo & \underline{63.75} &  60.54 &  71.75 &  67.82 &  76.38 &  77.78 &  75.75 &  46.79 &  62.86 &  67.05  \\
    & LoRA-LEGO & 59.42& \underline{65.40} &\underline{75.50} &\underline{72.29} &\underline{82.51} &\underline{78.52} &\underline{75.98} &\underline{58.54}& 64.29 &\underline{70.27} \\
   & LoRA-LEGO-CoTo &  61.83 & \textbf{65.75} & \textbf{78.25} & \textbf{76.81} &  \textbf{82.90} &  77.78 &  \textbf{76.32} &  \textbf{58.74} &  \underline{65.71} &  \textbf{71.57} \\
\bottomrule
\end{tabular}
}
\vskip -0.1in
\label{tab:llmloralego}
\end{table*}

\noindent{\bf Multi-Task Model Merging.} 
Building on CoTo's enhanced LMC, we next examine its impact on merging adapters trained on different tasks following the experimental setup and default configurations in~\citep{zhao2024merging}. First, we consider multi-task merging for generative language understanding tasks~\citep{wang2019glue}. We test on seven in-domain tasks and two out-of-domain tasks~\citep{longpre2023flan} via the same prompt format~\citep{weifinetuned}. Both LLaMA-2-7B and LLaMA-2-13B backbones are used. We employ three merging strategies: linear weight fusion (\ie, $\Delta\mathbf{W}_f$), linear model ensemble (\ie, $\Delta\mathbf{W}_e$), and the LoRA-LEGO method proposed by~\citet{zhao2024merging}, which explicitly aligns and fuses parameter updates. As reported in Table~\ref{tab:llmloralego}, CoTo-trained adapters yield markedly better merging performance. On LLaMA-2-7B, linear weight fusion of CoTo-based adapters improves average accuracy from $47.17 \%$ to $58.53\%$ ($+11.36\%$), and LoRA-LEGO merging rises from $62.21 \%$ to $67.19\%$ ($+4.98\%$). The ensemble approach also benefits some tasks, though gains are more modest. Similar trends are observed for LLaMA-2-13B, indicating that CoTo encourages each adapter to converge to parameters that lie in closer, low-loss subspaces.

\begin{figure}[t]
    \centering
    \includegraphics[width=\linewidth]{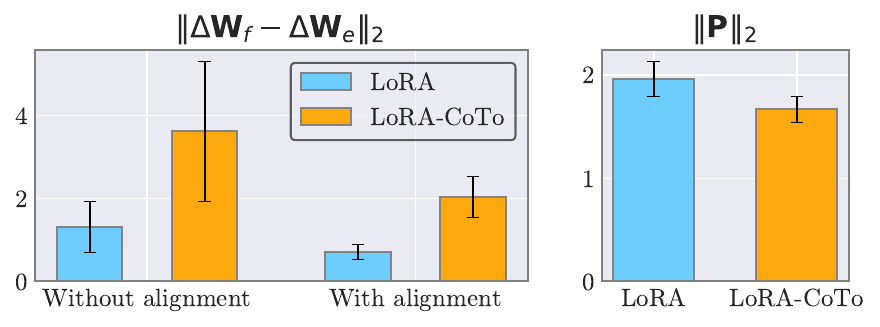}
    \vskip -0.1in
    \caption{Analysis of the optimized alignment matrix $\mathbf{P}$ for LoRA$^\dagger$ and LoRA-$\mathrm{CoTo}^\dagger$.  Error bars denote standard deviations. }
    \vskip -0.1in
    \label{fig:p-analysis}
\end{figure}

We also evaluate CoTo's efficacy for merging adapters on six discriminative language understanding tasks~\citep{wang2019glue} using DeBERTa-v3~\citep{hedebertav3}.  Although merging classifiers across distinct tasks is inherently more challenging due to differences in feature pooling and output prediction, CoTo-trained adapters still exhibit consistent improvements across all three merging strategies (see Table~\ref{tab:llmloradeberta} in the Appendix). These findings indicate that CoTo is compatible with existing merging techniques and consistently enhances multi-task LoRA merging across both generative and discriminative architectures.

\begin{figure}[t]
\centering
\includegraphics[width=0.85\linewidth]{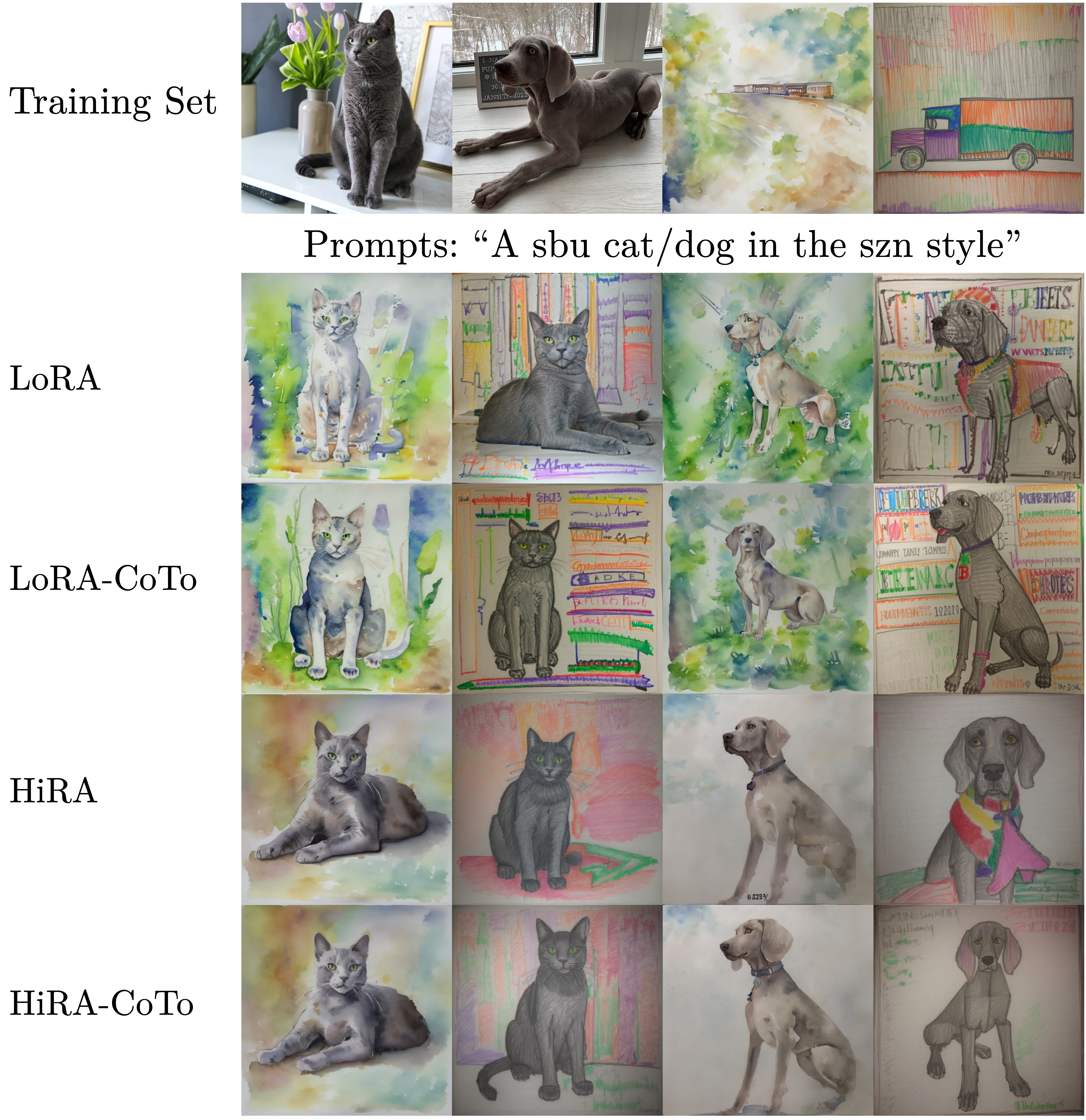}
\caption{Customized sample images generated by SDXL~\citep{podell2024sdxl} with and without CoTo. When merging style and object adapters via ZipLoRA~\citep{shah2025ziplora}, CoTo preserves both the object identity and artistic style more faithfully than vanilla LoRA. Each comparison uses the same seed.}
\vskip -0.1in
\label{fig:diffusion}
\end{figure}

Finally, we explore CoTo in the context of diffusion-based generative models. Using SDXL~\citep{podell2024sdxl} as our backbone, we apply DreamBooth~\citep{ruiz2023dreambooth} to fine-tune separate style and object adapters---specifically, object LoRAs for two categories (\ie, cat and dog) and style LoRAs for two artistic styles (\ie, watercolor and crayon). We then merge style and object adapters using ZipLoRA~\citep{shah2025ziplora}. Qualitative results in Figure~\ref{fig:diffusion} demonstrate that CoTo significantly reduces style and object forgetting: for instance, a crayon-style cat generated with LoRA-CoTo clearly preserves both the cat’s identity and ``crayon-ness,'' whereas vanilla LoRA often compromises one or the other. This qualitative evidence further attests to CoTo's ability to learn adapters that merge more coherently across diverse vision and language tasks.

\subsection{Model Pruning}\label{sec:e-prune}
The stochastic nature of CoTo naturally lends itself to improved pruning robustness, as adapters are trained to maintain performance even when a random subset is deactivated. To systematically evaluate this property, we conduct both structured and unstructured 
 pruning experiments. We first examine layer-wise structured pruning on the visual texture classification task~\citep{cimpoi2014describing} by selectively removing adapters from different network layers. As shown in the left panel of Figure~\ref{fig:loraprune}, we compare four configurations: 1) removing alternating layers (denoted by $\mathrm{EveryOther}$), 2) pruning the first $4$ layers (denoted by $\mathrm{Low}$), 3) pruning middle $4$ layers (denoted by $\mathrm{Middle}$), and 4) pruning the last $4$ layers (denoted by $\mathrm{High}$). The results demonstrate that CoTo-trained adapters maintain significantly better performance across all pruning patterns compared to vanilla LoRA. Notably, the ``Early LoRA-CoTo'' checkpoint (sampled right after the first $25\%$ of training) already shows strong pruning robustness, indicating that the benefits emerge early in the progressive training schedule. Complete results are detailed in Figure~\ref{fig:pruning-complete-dataset} of the Appendix.

\begin{figure}[!t]
  \centering
  \subfigure{\label{fig:loraprune-structure}\includegraphics[width=0.48\linewidth]{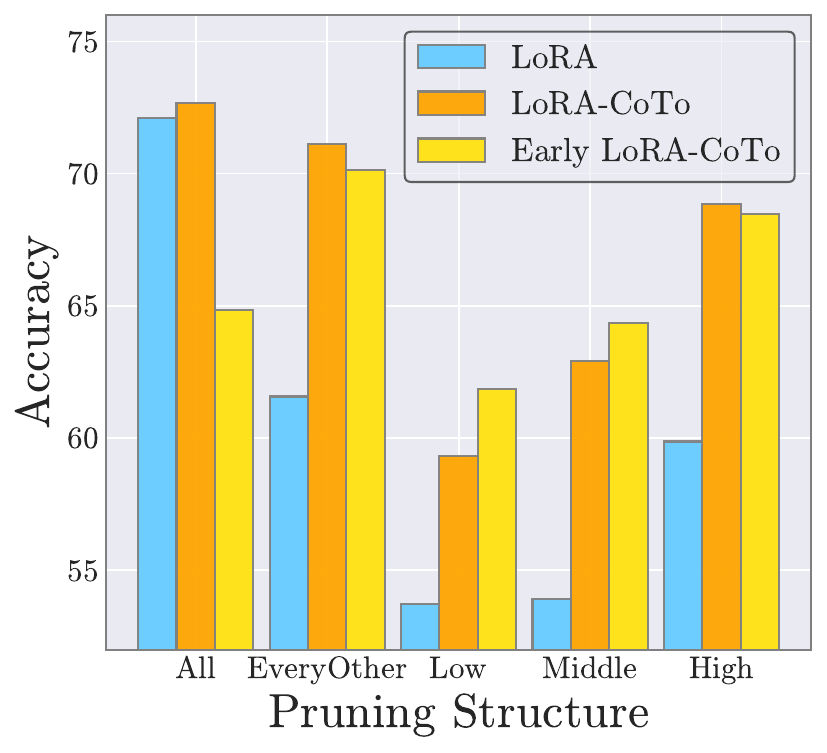}}\hfill
  \subfigure{\label{fig:loraprune-dropout}\includegraphics[width=0.48\linewidth]{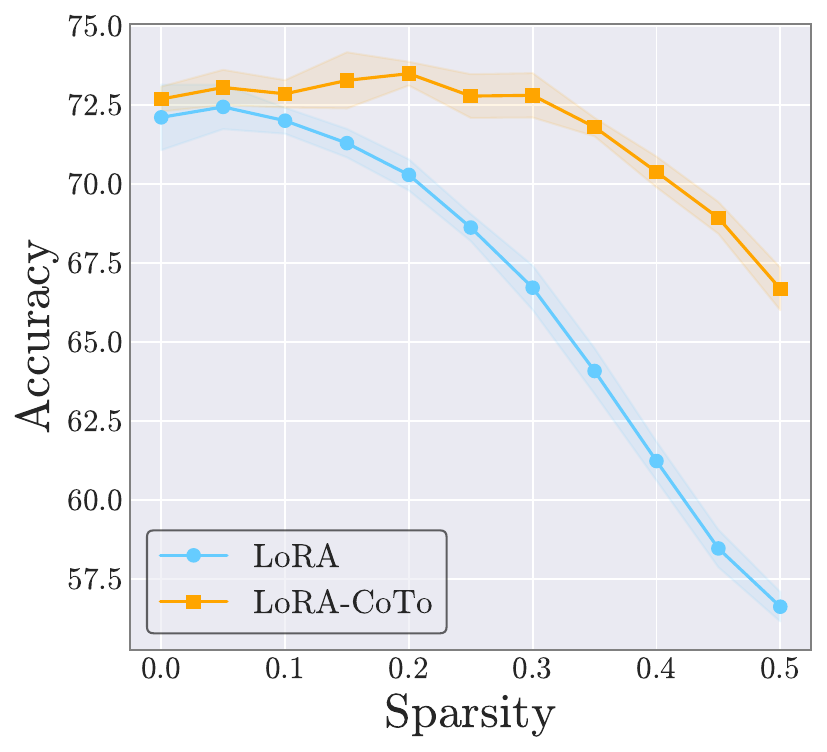}}
  \vskip -0.12in
  \caption{Average accuracy (\%) on model pruning for the visual texture classification task~\citep{cimpoi2014describing}. Left panel: Structured pruning applied to LoRA, LoRA-CoTo, and Early LoRA-CoTo under four pruning patterns: alternating layers ($\mathrm{EveryOther}$), first $4$ layers ($\mathrm{Low}$), middle $4$ layers ($\mathrm{Middle}$), and last $4$ layers ($\mathrm{High}$). ``$\mathrm{All}$'' denotes unpruned models. Right panel: Unstructured pruning with varying sparsity.}
  \vskip -0.2in
  \label{fig:loraprune}
\end{figure}

For fine-grained sparsity analysis, we evaluate unstructured pruning by zeroing out increasing percentages of adapter parameters. The right panel of Figure~\ref{fig:loraprune} shows that the performance gap widens with increasing sparsity level, and at $50\%$ sparsity, LoRA-CoTo achieves $10\%$ higher accuracy than vanilla LoRA. These findings highlight that CoTo enables more aggressive pruning while maintaining model utility even at high sparsity levels.
\begin{table}[!t]
\centering
\vskip -0.05in
\caption{Mean Euclidean distance between LoRA adapter weights across four learning rates. {Init. to Final}: Distance from initial to final weights. {Final to Final}: Distance between final weights from different initialization. Note that the mean initial distances are $1.155$ for independent random seeds and $0.02$ for perturbed seeds.}
\vskip 0.08in
\setlength{\tabcolsep}{9.5pt}
\resizebox{\linewidth}{!}{
\begin{tabular}{lcccc}
\toprule
{Distance} & 5e-5 & 1e-4  & 5e-4  & 1e-3 \\ \midrule
\rowcolor{mygrey} \multicolumn{5}{l}{Random Initialization}\\
Init. vs. Final (w/o CoTo)  & 0.48 & 0.45 & 0.76 & 1.32  \\
Init. vs. Final (w/ CoTo)  & 0.61 & 0.79  & 1.25 & 1.64 \\
Final vs. Final (w/o CoTo)  & 1.70 & 1.81 & 2.35 & 3.12  \\	
Final vs. Final (w/ CoTo)  & 1.38 & 1.53  &	2.14 & 2.63  \\
\midrule
\rowcolor{mygrey} \multicolumn{5}{l}{Same Initialization with Minor Additive Uniform Noise}\\
Final vs. Final (w/o CoTo)  & 0.05 & 	0.07 	& 0.59 	& 1.71 \\
Final vs. Final (w/ CoTo) &	0.04	& 0.06 &	0.21	& 0.52  \\
\bottomrule
\end{tabular}
}
\vskip -0.1in
\label{tab:init-comp}
\end{table}

\subsection{Further Analysis on DTD} 
\noindent\textbf{Convergence Near Initialization.} 
We empirically validate that LoRA adapters converge near their initialization, consistent with ``lazy training'' dynamics~\citep{chizat2019lazy}. Using t-SNE visualization~\citep{van2008visualizing} (see Figure~\ref{fig:tsne} in the Appendix), we find that final adapter weights form tight clusters around their respective initialization points across five random seeds and four learning rates. This local convergence persists regardless of whether CoTo is applied. Nevertheless, CoTo yields slightly larger distances (see Table~\ref{tab:init-comp}), suggesting broader exploration. Moreover, final weights from independent random seeds are closer under CoTo, indicating more consistent convergence paths. When initialized from the same point with minor additive uniform noise, CoTo-trained adapters converge to tighter clusters compared to vanilla LoRA, demonstrating robustness to initialization perturbations.

\noindent\textbf{Adapter Contribution Analysis.}
To quantify layer-wise marginal utilization, we compute approximated Shapley values~\citep{shapley1953value}  for each adapter via multilinear extension~\citep{owen1972multilinear} on the visual texture classification task~\citep{cimpoi2014describing} again for its representativeness and computational feasibility. As shown in Figure~\ref{fig:shapley}, vanilla LoRA exhibits skewed contributions, with $69\%$  concentrated in the highest $4/12$ Transformer layers. In contrast, LoRA-CoTo and Early LoRA-CoTo achieve more balanced contributions ($\pm 8\%$ and $\pm 3\%$ deviations across layers), confirming its efficacy in mitigating gradient imbalance. 

\begin{figure}[!t]
\centering
\includegraphics[width=\linewidth]{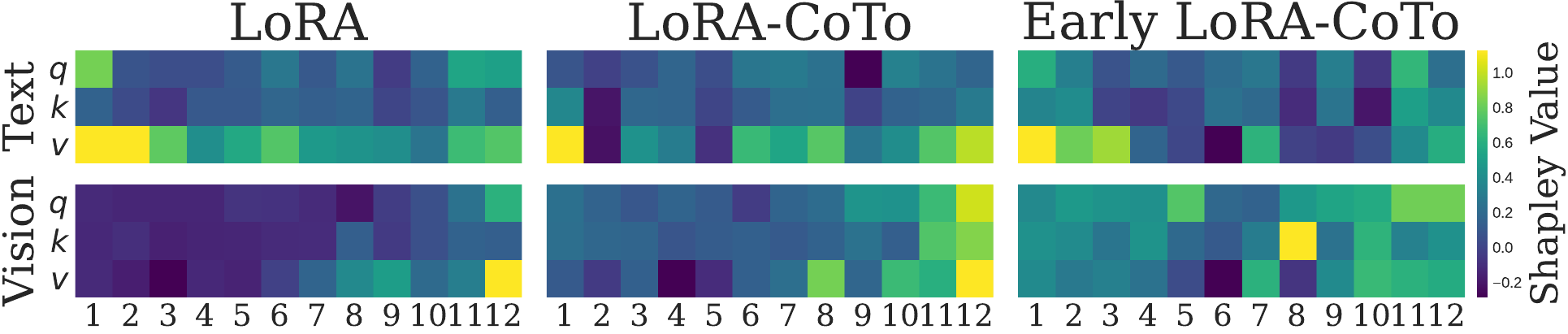}
\vskip -0.1in
\caption{Approximated Shapley values of LoRA adapters by multilinear extension~\citep{owen1972multilinear}.}
\label{fig:shapley}
\end{figure}

\subsection{Ablation Studies}
To systematically evaluate the design choices of CoTo and its robustness to hyperparameter variations, we conduct key ablation experiments on mathematical reasoning tasks~\citep{cobbe2021training}. All experiments employ LoRA-Pro~\citep{wang2024lorapro} as the baseline.

\noindent{\bf Training Phase Transition.} We first investigate the impact of varying the proportion of training time
allocated to the stochastic activation phase (see left panel of Figure~\ref{fig:mainfig}). The $x$-axis represents the percentage of total training spent in the first phase (where $p(t) < 1$), with $0\%$ corresponding to vanilla LoRA (\ie, without CoTo) and $100\%$ representing training exclusively with stochastic activation (\ie, $p(t)$ never reaches $1$). Our results demonstrate that a $75\%$ first-phase proportion strikes a good balance between task performance and training efficiency with secondary benefits like improved LMC.

\begin{figure}[t]
    \centering
    \vskip -0.05in
    \subfigure{
        \includegraphics[width=0.47\linewidth]{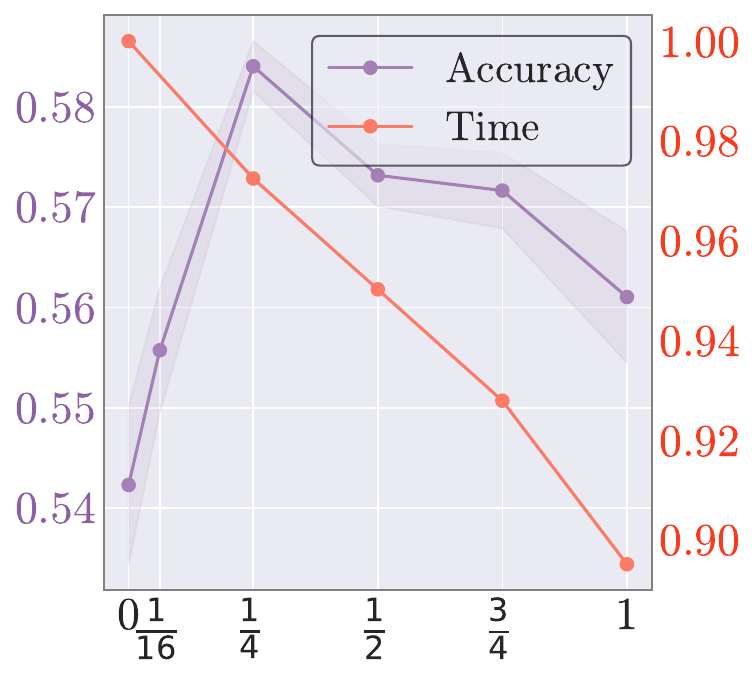}
        \label{fig:ablation_subfig1}
    }\hfill
    \subfigure{
        \includegraphics[width=0.465\linewidth]{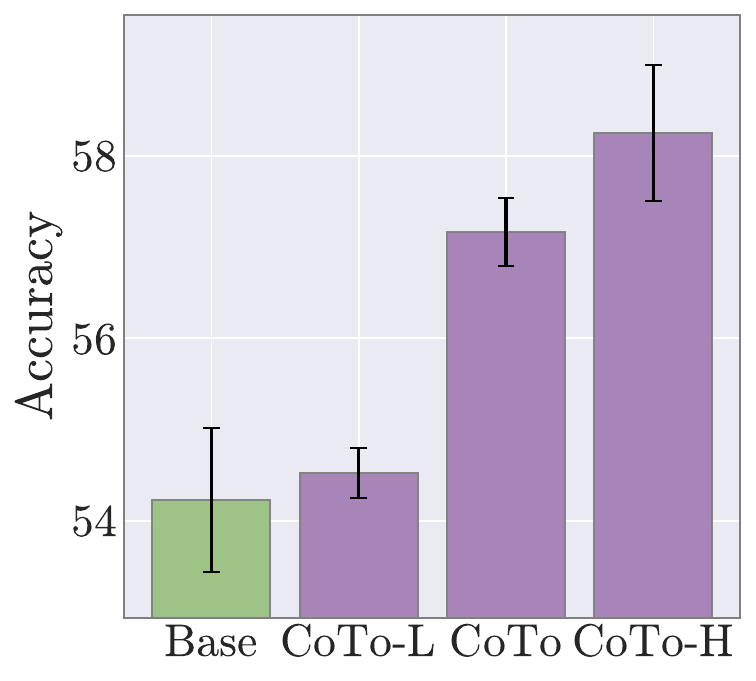}
        \label{fig:ablation_subfig2}
    }
    \vskip -0.12in
    \caption{Ablation analysis of CoTo. Left panel: Impact of varying the proportion of training time allocated to the first (stochastic activation) phase on model accuracy (purple) and normalized training time (orange). Right panel: Comparison of dropout strategies, including no dropout ($\mathrm{Base}$)), nested dropout from lower layers (CoTo-{L}), uniform dropout (CoTo), and nested dropout from higher layers (CoTo-{H}).}
    \vskip -0.1in
    \label{fig:mainfig}
\end{figure}

\noindent{\bf Dropout Strategy.} 
CoTo applies uniform dropout probability across all layers. To assess layer-specific effect, we design two variants: 1) CoTo-{L}, where adapters in lower layers are deactivated first, \ie, at time $t$, only adapters in layers $i>i_0$  (for some threshold $i_0$ determined by $p(t)$) remain active, and 2) CoTo-{H}, where adapters in higher layers are deactivated first, so that early in training only lower‐layer adapters participate. The results reveal that CoTo and CoTo-{H} outperform CoTo-{L}, indicating that randomly deactivating---or prioritizing deactivation of---high‐layer adapters is more effective. Nest‐dropping from lower layers forces the model to rely prematurely on higher‐layer adapters and undermines balanced optimization.

\begin{table}[t]
\caption{Average accuracy (\%) on mathematical reasoning tasks~\citep{cobbe2021training} for LoRA-Pro across different adapter ranks, learning rates, and insertion modules.}
\centering
\vskip 0.08in
\resizebox{\linewidth}{!}{
\begin{tabular}{lccc}
\toprule
LoRA Rank & 8 & 32  & 128 \\ 
\midrule
\ \ LoRA-Pro  & 54.23 \small{± 0.79} & 55.14 \small {± 1.73} & 56.48 \small{± 0.23} \\
\ \  + CoTo  & 57.16 \small{± 0.38} & 57.24 \small{± 0.06} & \textbf{58.50} \small{± 0.46}\\
\toprule
Learning Rate & 5e-5 & 1e-4  & 2e-4 \\ 
\midrule
\ \  LoRA-Pro  &  55.70 \small{± 0.96} & 55.85 \small{± 0.74} & 40.91 \small{± 1.09} \\
\ \  + CoTo  & 55.83 \small{± 0.38} & \textbf{57.16} \small{± 0.38} & 56.25 \small{± 0.53}\\
\toprule
Insertion Module & Attention  & Projection & Gating \\ 
\midrule
\ \  LoRA-Pro  & 45.44 \small{± 0.40} & 49.08 \small{± 0.64} &  48.40 \small{± 0.16}\\
\ \  + CoTo  & 52.41 \small{± 0.56} &  \textbf{54.06} \small{± 0.80}&  51.96 \small{± 0.28} \\
\bottomrule
\end{tabular}
}
\label{tab:llm_ablation}
\end{table}

\noindent{\bf Hyperparameter Sensitivity.} 
To demonstrate CoTo's compatibility with diverse LoRA configurations, we fine‐tune LLaMA-2-7B using three adapter ranks ($r = 8, 32, 128$), three learning rates (5e-5, 1e-4, 2e-4), and three choices of insertion modules from attention, projection, and gating layers, respectively. From Table~\ref{tab:llm_ablation}, we find that LoRA-Pro-CoTo consistently outperforms LoRA-Pro in all settings. 

Further, Figure~\ref{fig:merging-pruning-results} extends this analysis to merging and pruning on the visual texture classification task~\citep{cimpoi2014describing} across five learning rates. CoTo‐trained adapters yield higher merging accuracy and maintain stronger robustness under structured pruning. These trends underscore CoTo's generality: it benefits LoRA variants across a wide spectrum of hyperparameter settings.

\begin{figure}
    \centering
    \includegraphics[width=\linewidth]{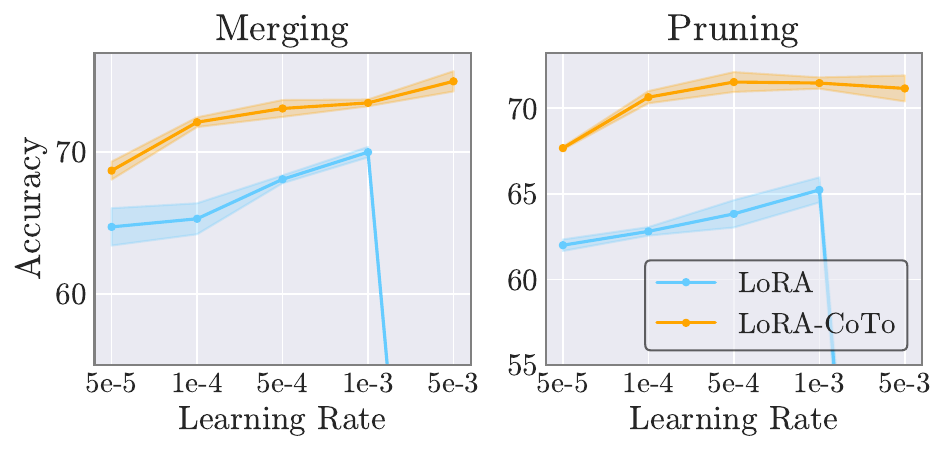}
    \vskip -0.1in
    \caption{Average accuracy (\%) on model merging and pruning for the visual texture classification task~\citep{cimpoi2014describing}. Left panel: Merging accuracy when $\lambda = 0.5$. Right panel: Pruning accuracy when removing alternating layers (\ie, $\mathrm{EveryOther}$).}
    \label{fig:merging-pruning-results}
    \vskip -0.13in
\end{figure}

\noindent\textbf{Training Overhead Reduction.} 
Because CoTo stochastically deactivates adapters in early iterations, it reduces both forward and backward computation. We compare end‐to‐end fine‐tuning times for LoRA, DoRA~\citep{liudora}, and HiRA~\citep{anonymous2024hira}, all under identical hardware and batch‐size settings. From Table~\ref{tab:llm_time}, we observe noticeable training overhead reduction, which arises because, when an adapter is inactive (\ie, $\delta_i = 0$), its low‐rank matrices are skipped entirely. Variants with larger adapter footprints (\eg, DoRA and HiRA) thus experience more pronounced computational savings. Importantly, these gains accrue early in training yet do not compromise---or even improve---final accuracy.

\begin{table}[t]
\centering
\setlength{\tabcolsep}{10.5pt}
\caption{Wall-clock training times for LoRA, DoRA, and HiRA on mathematical reasoning tasks~\citep{cobbe2021training} using a single NVIDIA A6000 GPU.}
\vskip 0.08in
\resizebox{\linewidth}{!}{
\begin{tabular}{lccc}
\toprule
 & LoRA & DoRA  & HiRA \\ 
\midrule
w/o CoTo  & 7h 38min & 19h 00min  & 11h 39min\\
w/ CoTo  & 7h 05min & 14h 30min  & 8h 50min\\
{Speedup}  & {7.20\%} & {23.69\%}  & {24.21\%} \\
\bottomrule
\end{tabular}
}
\vskip -0.08in
\label{tab:llm_time}
\end{table}
\section{Conclusion and Discussion}
We have introduced CoTo, a progressive training strategy for LoRA. By gradually increasing adapter activation probability during training, CoTo promotes more balanced optimization across all layers while encouraging broader exploration of the loss landscape. Our theoretical analysis showed that CoTo enhances layer-wise dropout stability and LMC, while the cooperative-game perspective provided quantitative insights into each adapter's marginal contribution. Extensive experiments across vision-language models, large language models, and diffusion models consistently validated the effectiveness of CoTo. 

While CoTo integrates seamlessly with diverse LoRA variants, a promising direction is to identify the optimal combination of existing LoRA ``tricks.'' For instance, one could explore jointly applying CoTo's progressive schedule with adaptive-rank schemes (like AdaLoRA~\citep{zhang2023adaptive} or ALoRA~\citep{liu2024alora}), weight-decomposed updates (in DoRA~\citep{liudora}), or Hadamard-based high-rank adaptations (in HiRA~\citep{anonymous2024hira}), to determine how these techniques interact and where synergies arise. Systematically evaluating such combinations---potentially via automated hyperparameter search over activation schedules, rank allocations, and initialization strategies---could reveal configurations that maximize performance while minimizing parameter count and compute. Moreover, extending CoTo to jointly optimize over multiple PEFT objectives (\eg, balancing dropout stability, quantization compatibility, and rank efficiency) could yield a unified framework that adapts to various resource constraints and task requirements.

\section*{Acknowledgement}

This work was supported in part by the National Key Research and Development Program of China (No. 2022ZD0160300) and the National Natural Science Foundation of China (No. 62136005).

\section*{Impact Statement}
This work aims to advance the field of machine learning. While it may carry various societal implications, none of which we feel must be highlighted here.


\bibliographystyle{icml2025}

\newpage
\appendix
\onecolumn

\section{Training Details}\label{sec:appendix-training-details}
\subsection{Implementation Details}
CoTo is implemented as a lightweight \texttt{TrainerCallback} within the PEFT ecosystem\footnote{\hyperlink{https://github.com/huggingface/peft}{https://github.com/huggingface/peft}}, enabling seamless integration with any LoRA-style fine-tuning loop. Algorithm~\ref{alg:coto} summarizes its computational procedure.

\begin{algorithm}[t]
    \caption{CoTo}
    \label{alg:coto}
    \begin{algorithmic}[1]
        \REQUIRE Foundation model parameters $\boldsymbol{\theta}_0$, LoRA adapters $\Delta \boldsymbol{\theta} = \{\Delta \mathbf{W}_i\}_{i=1}^{L}$, and the total number of training steps $T$
        \FOR{$t = 1, \ldots, T$}
            \STATE Compute activation probability: $p = \min\left\{1, \frac{4t}{3T}\right\}$
            \STATE Draw a vector $\boldsymbol{\eta} = [\eta_1, \ldots,\eta_L]^\intercal$, where each $\eta_i$ is sampled independently from the uniform distribution $\mathcal{U}(0,1)$ 
            \FOR{each layer $i = 1, \dots, L$}
                \STATE Set adapter state in layer $i$: $\delta_i = \mathbb{I}[\eta_i \leq p]$ 
            \ENDFOR
            \STATE Compute the prediction $\hat{\boldsymbol{y}} = f\left(\boldsymbol{x}_0; \left\{\mathbf{W}_i + \delta_i\mathbf{1} \odot\Delta \mathbf{W}_i\right\}_{i=1}^{L}\right)$
            \STATE Compute the loss $\ell(\hat{\boldsymbol{y}}, \boldsymbol{y})$
            \STATE Compute the gradient $\nabla_{\{\Delta \mathbf{W}_i\}}\ell(\hat{\boldsymbol{y}}, \boldsymbol{y})$ and update the model parameters $\{\Delta \mathbf{W}_i\}$
        \ENDFOR
    \end{algorithmic}
\end{algorithm}

\subsection{Datasets}
We evaluate CoTo across five computational prediction tasks: 1) image classification, 2) commonsense reasoning, 3) mathematical reasoning, 4) language understanding, and 5) image generation. For \textit{image classification}, we follow~\citet{zanella2024low} and use $11$ datasets:
\begin{itemize}
    \item Aircraft~\citep{maji2013fine} (aircraft classification)
    \item Caltech~\citep{fei2004learning} (object recognition)
    \item Cars~\citep{krause20133d} (car classification)
    \item DTD~\citep{cimpoi2014describing} (visual texture classification)
    \item EuroSAT~\citep{helber2019eurosat} (satellite land classification)
    \item Flowers~\citep{nilsback2008automated} (flower classification)
    \item Food~\citep{bossard2014food} (food classification)
    \item ImageNet~\citep{deng2009imagenet} (large-scale object recognition)
    \item Pets~\citep{parkhi2012cats} (pet breed classification)
    \item SUN~\citep{xiao2010sun} (scene recognition)
    \item UCF~\citep{soomro2012ucf101} (human action classification)
\end{itemize}
For \textit{commonsense reasoning}, we use $8$ tasks from {Commonsense170K}~\citep{hu2023llm}:
\begin{itemize}
    \item ARC-c and ARC-e~\citep{arc-ce} (science questions)
    \item BoolQ~\citep{boolq} (yes/no questions)
    \item HellaSwag~\citep{hella} (commonsense inference)
    \item OBQA~\citep{obqa} (multi-step reasoning)
    \item PIQA~\citep{piqa} (physical commonsense reasoning)
    \item SIQA~\citep{siqa} (social reasoning)
    \item WinoGrande~\citep{winoGrande} (fill-in-the-blank questions)
\end{itemize}
For \textit{mathematical reasoning}, we fine-tune on {MetaMathQA}~\citep{yumetamath} and test on  {GSM8K}~\citep{cobbe2021training}. For \textit{language understanding}, we follow~\citet{zhao2024merging} and use $9$ tasks from {GLUE}~\citep{wang2019glue} and Flan Collection~\citep{longpre2023flan}:
\begin{itemize}
    \item CoLA~\citep{dolan2005automatically} (linguistic acceptability)
    \item MNLI~\citep{williams2018broad} (multi-genre natural language inference)
    \item MRPC~\citep{dolan2005automatically} (paraphrase detection)
    \item QNLI~\citep{rajpurkar2016squad} (question-answering)
    \item QQP\footnote{\hyperlink{https://data.quora.com/First-Quora-Dataset-Release-Question-Pairs}{https://data.quora.com/First-Quora-Dataset-Release-Question-Pairs}} (Quora questions)
    \item RTE~\citep{dagan2006pascal, bar2006second, giampiccolo2007third, bentivogli2009fifth} (textual entailment recognition)
    \item SNLI~\citep{bowman2015large} (natural language inference)
    \item SST2~\citep{socher2013recursive} (sentiment analysis)
    \item WNLI~\citep{levesque2011winograd} (coreference resolution)
\end{itemize}
For \textit{image generation}, we train on two content categories from DreamBooth~\citep{ruiz2023dreambooth} and two artistic styles from Hugging Face’s LoRA the Explorer~\footnote{\hyperlink{https://huggingface.co/spaces/multimodalart/LoraTheExplorer}{https://huggingface.co/spaces/multimodalart/LoraTheExplorer}}.
    
\begin{table*}[t]
\caption{Grid-searched learning rates for $11$ image classification tasks~\citep{zanella2024low} across different LoRA variants.}
\label{tab:lr4vlm}
\vskip 0.08in
\centering
\setlength{\tabcolsep}{8.5pt}
\resizebox{\linewidth}{!}{
\begin{tabular}{lccccccccccc}
\toprule
Method & Aircraft & Caltech & Cars & DTD & EuroSAT & Flowers & Food & ImageNet & Pets & SUN & UCF \\
\midrule
LoRA        & 2e-4 & 2e-4 & 2e-4 & 2e-4 & 2e-4 & 2e-4 & 2e-4 & 2e-4 & 2e-4 & 2e-4 & 2e-4 \\
LoRA-CoTo   & 5e-4 & 5e-4 & 5e-4 & 5e-4 & 5e-4 & 5e-4 & 5e-4 & 5e-4 & 5e-4 & 5e-4 & 5e-4 \\
DoRA        & 1e-3 & 1e-4 & 1e-3 & 1e-4 & 1e-4 & 1e-3 & 1e-4 & 1e-4 & 1e-4 & 1e-4 & 1e-4 \\
DoRA-CoTo   & 1e-3 & 2e-4 & 1e-3 & 2e-4 & 1e-3 & 1e-3 & 2e-4 & 2e-4 & 2e-4 & 2e-4 & 2e-4 \\
HiRA        & 5e-3 & 1e-3 & 5e-3 & 1e-3 & 5e-3 & 5e-3 & 1e-3 & 5e-3 & 1e-3 & 1e-3 & 1e-3 \\
HiRA-CoTo   & 1e-2 & 5e-3 & 5e-2 & 5e-3 & 1e-2 & 5e-2 & 5e-3 & 5e-3 & 5e-3 & 5e-3 & 5e-3 \\
\bottomrule
\end{tabular}
}
\end{table*}

\begin{table*}[t]
\caption{Standard deviation of classification accuracies across three random seeds for $11$ image classification tasks.}
\centering
\vskip 0.08in
\setlength{\tabcolsep}{8pt}
\resizebox{\linewidth}{!}{
\begin{tabular}{lccccccccccc}
\toprule
Method & Aircraft & Caltech & Cars & DTD & EuroSAT & Flowers & Food & ImageNet & Pets & SUN & UCF \\
\midrule
LoRA        & 0.75 & 0.12 & 0.24 & 0.97 & 0.81 & 0.19 & 0.22 & 0.06 & 0.27 & 0.18 & 0.44 \\
LoRA-CoTo   & 0.72 & 0.10 & 0.29 & 0.39 & 0.44 & 0.36 & 0.18 & 0.04 & 0.39 & 0.20 & 0.22 \\
DoRA        & 1.04 & 0.32 & 0.32 & 1.39 & 1.09 & 0.18 & 0.20 & 0.10 & 0.28 & 0.10 & 0.22 \\
DoRA-CoTo   & 0.31 & 0.20 & 0.19 & 0.62 & 0.41 & 0.12 & 0.11 & 0.09 & 0.11 & 0.21 & 0.17 \\
HiRA        & 0.27 & 0.09 & 0.25 & 1.33 & 1.33 & 0.02 & 0.09 & 0.14 & 0.09 & 0.11 & 0.23 \\
HiRA-CoTo   & 0.15 & 0.11 & 0.08 & 0.28 & 0.05 & 0.15 & 0.12 & 0.10 & 0.36 & 0.13 & 0.35 \\
\bottomrule
\end{tabular}
}
\label{tab:std_image_classification}
\end{table*}

\subsection{Hyperparameters}
We use publicly available implementations of CLIP-LoRA~\citep{zanella2024low}, DoRA~\citep{liudora}, HiRA~\citep{anonymous2024hira}, LoRA-Pro~\citep{wang2024lorapro}, LoRA-LEGO~\citep{zhao2024merging}, and ZipLoRA~\citep{shah2025ziplora}, retaining original hyperparameters unless otherwise specified.
Table~\ref{tab:lora_config} details the adapter rank, learning rate, insertion module, and dropout configurations for each task. Notably, HiRA requires a significantly higher learning rate ($10$–$20\times$ the default) for convergence. For \textit{image classification}, we grid search learning rates around each method’s default (see Table~\ref{tab:lr4vlm}), while other tasks employ one initial learning rate paired with a cosine annealing schedule.

\begin{table}[t]
\centering
\caption{LoRA hyperparameter configurations for different tasks. The dropout rate is predefined in the LoRAConfig class and applied independently of the proposed CoTo.}
\vskip 0.08in
\label{tab:lora_config}
\setlength{\tabcolsep}{9pt}
\resizebox{\linewidth}{!}{
\begin{tabular}{lccccc}
\toprule
Task & Rank & Default Learning Rate & CoTo Learning Rate & Insertion Module & Dropout Rate \\
\midrule
Image Classification & 2 & 5e-5 & 2e-4 &  Attention  & 0 \\
Commonsense Reasoning & 32 & 1e-5 & 5e-5 & Attention, Projection & 0.05\\
Mathematical Reasoning & 8 & 2e-5 & 1e-4 & Attention, Projection, Gating & 0.1\\
Language Understanding & 8 & / & 2e-5 & Attention ($\mathbf{Q}, \mathbf{V}$) & 0.05 \\
Image Generation & 16 & 1e-5 & 5e-5 & Attention  & 0 \\
\bottomrule
\end{tabular}
}
\vskip -0.08in
\end{table}

\section{Additional Experimental Results}

\subsection{Image Classification}
Standard deviations across $3$ seeds remain low ($<1.4\%$) for all image classification tasks (see Table~\ref{tab:std_image_classification}), confirming result reliability in Table~\ref{tab:vlm_few_shot_results}.

\subsection{Single-Task Merging for Commonsense Reasoning}
Per-task interpolation curves (see Figure~\ref{fig:commonsense-lmc2}) show that CoTo consistently outperforms vanilla LoRA across all $8$ commonsense reasoning tasks, with flatter loss basins.

\begin{table*}[t]
    \centering
    \caption{Average accuracy (\%) on multi-task merging for $6$ discriminative language understanding tasks~\citep{wang2019glue} using the DeBERTa-v3~\citep{hedebertav3} backbone.
    }
    \vskip 0.05in
    \setlength{\tabcolsep}{12pt}
    \resizebox{\linewidth}{!}{
    \ \begin{tabular}{lllccccccc}
    \toprule
     & Learning Rate & Method  & CoLA & MRPC & QNLI & QQP & RTE & SST2 & Avg\\
    \midrule
    \multirow{8}{*}{\rotatebox{90}{w/o CoTo}} 
    & \multirow{4}{*}{5e-4} 
        & LoRA        & 87.44 & 89.46 & 94.31 & 91.05 & 85.56 & 95.18 & 90.50 \\
    & & Fusion       & 69.89 & 68.38 & 49.97 & 65.50 & 47.29 & 55.05 & 59.35 \\
    & & Ensemble     & 69.13 & 31.62 & 50.54 & 63.31 & 52.71 & 50.92 & 53.04 \\
    & & LoRA-LEGO    & 73.28 & 33.15 & 74.19 & 80.95 & 61.46 & 69.18 & 65.37 \\
    \cmidrule(lr){2-10}
    
    & \multirow{4}{*}{1e-3} 
        & LoRA        & 86.48 & 88.48 & 93.87 & 91.16 & 84.12 & 94.95 & 89.84 \\
    & & Fusion       & 69.13 & 68.38 & 49.46 & 63.18 & 47.29 & 50.92 & 58.06 \\
    & & Ensemble     & 69.13 & 68.38 & 49.46 & 63.18 & 47.29 & 50.92 & 58.06 \\
    & & LoRA-LEGO    & 73.54 & 69.32 & 79.54 & 78.67 & 51.99 & 54.13 & 67.86 \\
    
    \midrule

    \multirow{8}{*}{\rotatebox{90}{w/ CoTo}} 
    & \multirow{4}{*}{5e-4} 
        & LoRA        & 86.48 & 89.22 & 93.94 & 90.12 & 82.67 & 94.95 & 89.56 \\
    & & Fusion       & 69.22 & 68.38 & 67.12 & 70.32 & 47.29 & 58.14 & 63.41 \\
    & & Ensemble     & 70.66 & 40.69 & 51.31 & 66.71 & 47.29 & 66.40 & 57.18 \\
    & & LoRA-LEGO    & 53.75 & 70.77 & 76.19 & 75.26 & 64.31 & 89.01 & 71.55 \\
    \cmidrule(lr){2-10}
    
    & \multirow{4}{*}{1e-3} 
        & LoRA   & 86.58 & 89.95 & 93.92 & 90.49 & 83.03 & 95.18 & 89.86 \\
    & & Linear Fusion& 73.15 & 68.38 & 51.91 & 69.96 & 49.10 & 63.30 & 62.63 \\
    & & Ensemble     & 72.39 & 68.38 & 51.42 & 66.98 & 48.38 & 61.58 & 61.52 \\
    & & LoRA-LEGO    & 71.84 & 72.39 & 73.15 & 72.95 & 63.51 & 87.24 & 73.51 \\
    \bottomrule
\end{tabular}
}
\label{tab:llmloradeberta}
\end{table*}

\begin{figure}[t]
    \centering
    \includegraphics[width=\linewidth]{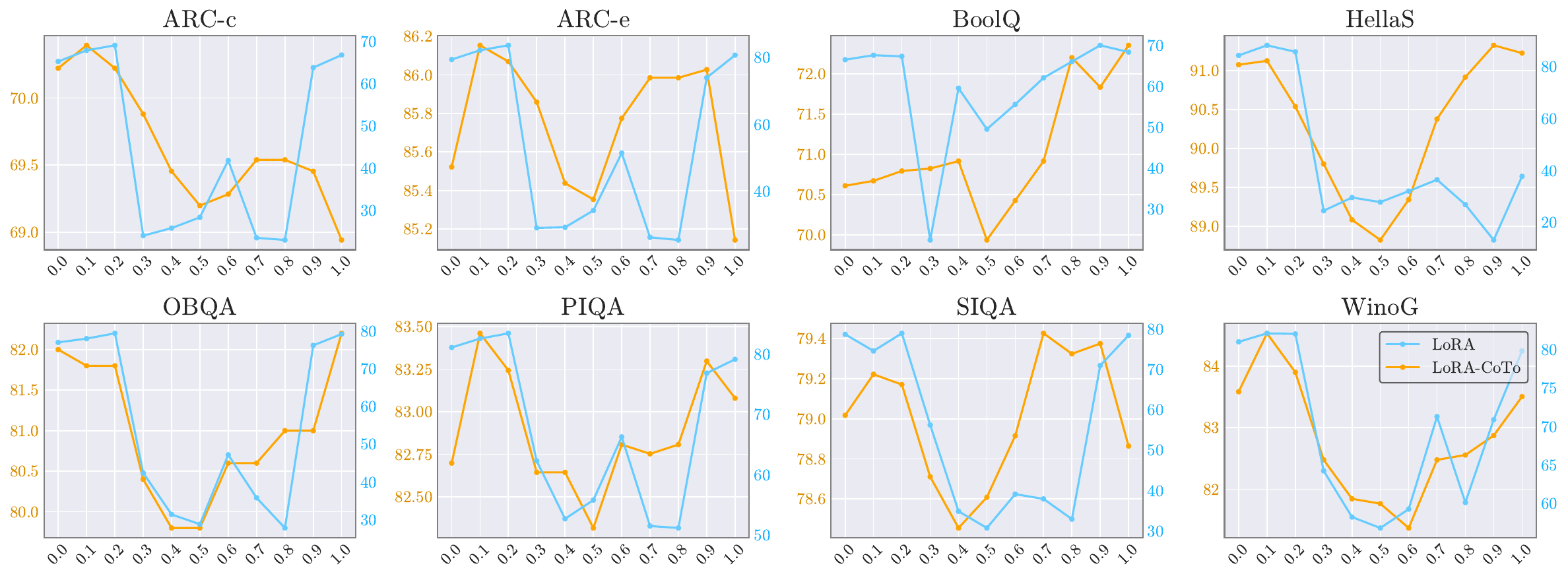}
    \vskip -0.1in
    \caption{ Linear interpolation accuracy on $8$ individual commonsense reasoning tasks~\citep{hu2023llm}.}
    \vskip -0.1in
    \label{fig:commonsense-lmc2}
\end{figure}

\subsection{Multi-Task Merging for Discriminative Language Understanding}
CoTo improves merging accuracy by $4.18\%$–$6.18\%$ across linear weight fusion, linear model ensemble, and LoRA-LEGO~\citep{zhao2024merging} strategies (see Table~\ref{tab:llmloradeberta}), validating benefits in discriminative language tasks using the DeBERTa-v3~\citep{hedebertav3} backbone.

\begin{figure*}[t]
    \centering
    \includegraphics[width=\linewidth]{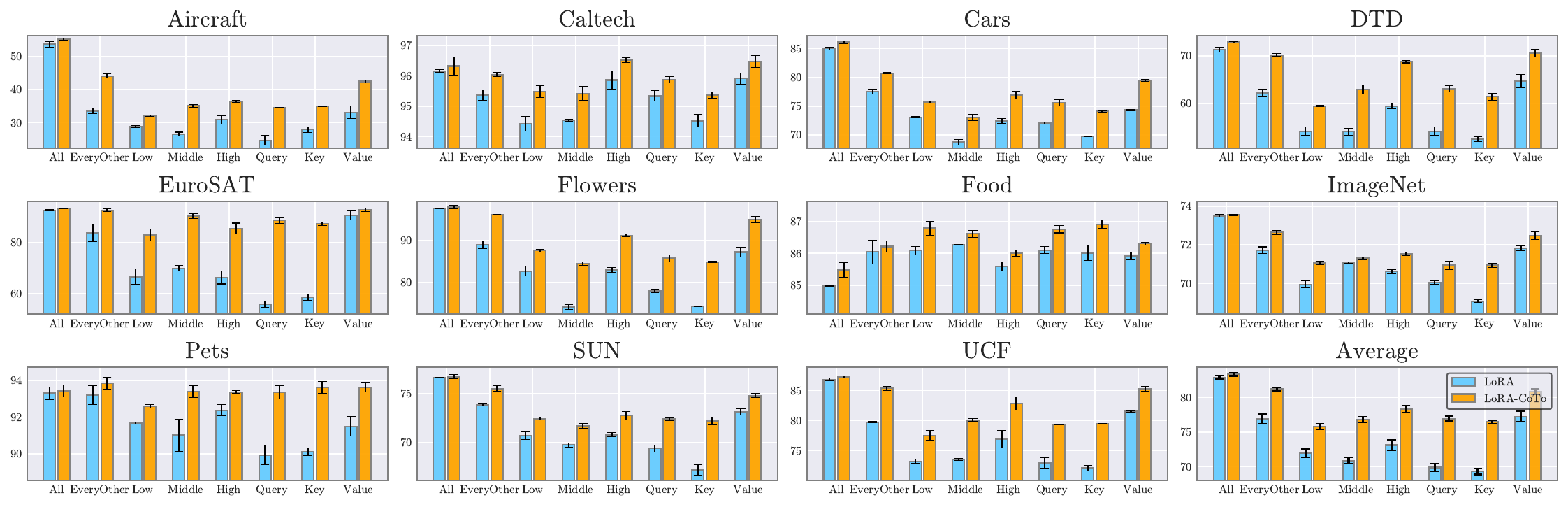}
    \vskip -0.05in
    \caption{ Average accuracy (\%) on structured model pruning for $11$ image classification tasks.}
    \label{fig:pruning-complete-dataset}
\end{figure*}

\begin{figure*}[t]
    \centering
    \includegraphics[width=\linewidth]{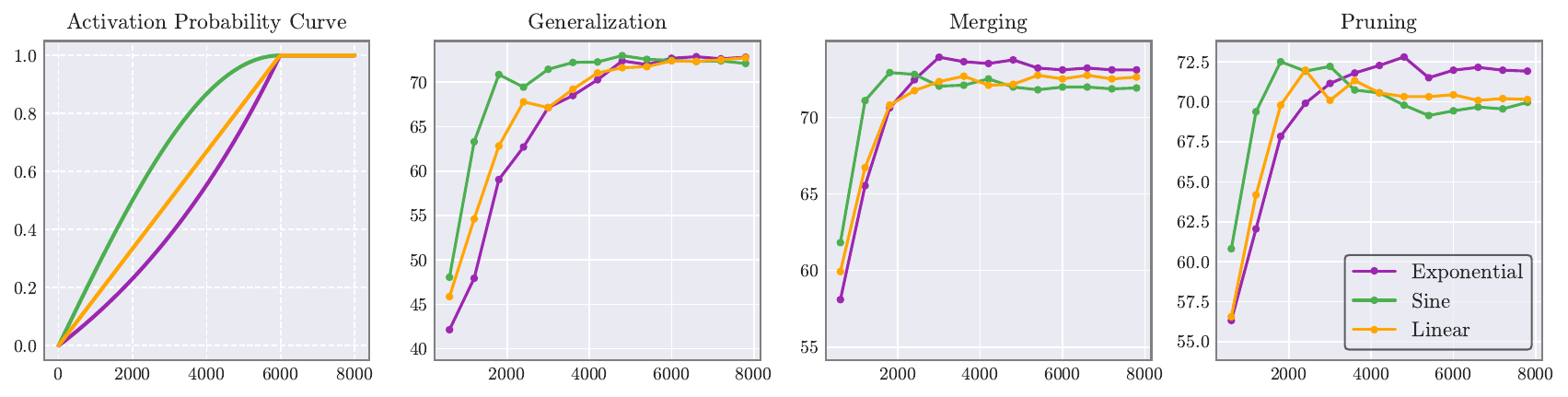}
    \vskip -0.05in
    \caption{Performance evolution during training under different activation schedules on the visual texture classification task~\citep{cimpoi2014describing}. Merging accuracy is measured at $\lambda =0.5$. Pruning accuracy is measured for $\mathrm{EveryOther}$ (\ie, removing alternating layers).}
    \label{fig:activation-results}
\end{figure*}

\subsection{Structured Pruning for Image Classification}
CoTo maintains higher accuracy than vanilla LoRA under all structured pruning strategies across $11$ image classification tasks (Figure~\ref{fig:pruning-complete-dataset}).

\section{Extended Analysis}
\subsection{Activation Schedule Ablation on DTD}\label{sec:activation-prob}
Linear activation (CoTo default) balances convergence speed and robustness (see Figure~\ref{fig:activation-results}). Exponential schedules improve merging/pruning but delay early convergence; sine schedules accelerate convergence but reduce robustness.

\subsection{Weight Convergence Visualization}
\begin{figure*}[t]
    \centering
    \includegraphics[width=\linewidth]{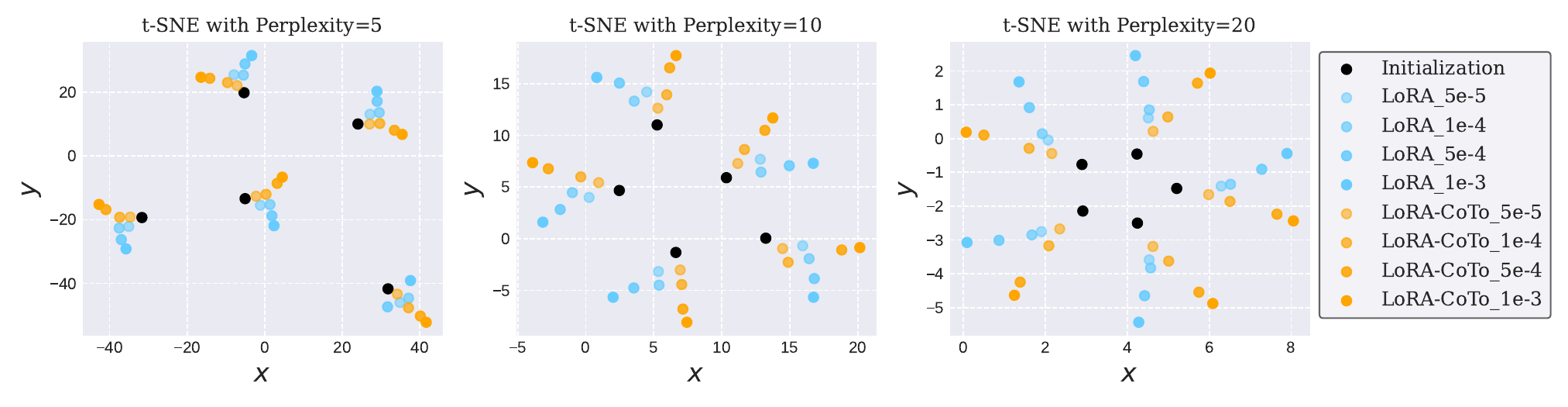}
    \vskip -0.2in
    \caption{t-SNE visualization~\citep{van2008visualizing} of the learned weights of LoRA adapters across five seeds and four learning rates, under three perplexity settings. Black dots denote initialization points, and color gradients indicate different learning rates.}
    \label{fig:tsne}
\end{figure*}
t-SNE plots (see Figure~\ref{fig:tsne}) confirm LoRA adapters converge near initialization (\ie, ``lazy training'') across learning rates. CoTo yields tighter clusters under initialization noise.

\section{Proof of Theorem~\ref{mytheoremone}}\label{sec:appendix-theory}
\mytheoremone*

\begin{proof}
\begin{equation}
\begin{aligned}
\mathbb{E}_{\boldsymbol{\delta}}\left[\ell\left(\hat{\boldsymbol{y}}, \boldsymbol{y}\right)\right] & =\mathbb{E}_{\boldsymbol{\delta}}\left[\ell\left(f\left(\boldsymbol{x}_0; \left\{\mathbf{W}_i + \delta_i\mathbf{1} \odot\Delta \mathbf{W}_i\right\}_{i=1}^{L}\right), \boldsymbol{y}\right)\right] \\
& =\sum_j\binom{L}{j} p^j(1-p)^{L-j} \mathbb{E}_{\|\boldsymbol{\delta}\|_1=j}\left[\ell\left(f\left(\boldsymbol{x}_0; \left\{\mathbf{W}_i + \delta_i \mathbf{1}\odot\Delta \mathbf{W}_i\right\}_{i=1}^{L}\right), \boldsymbol{y}\right)\right] \\
& \geq \sum_j w_j(p) \ell\left(\mathbb{E}_{\|\boldsymbol{\delta}\|_1=j} \left[f\left(\boldsymbol{x}_0; \left\{\mathbf{W}_i + \delta_i \mathbf{1}\odot\Delta \mathbf{W}_i\right\}_{i=1}^{L}\right)\right], \boldsymbol{y}\right)\\
& =\sum_j w_j(p) \ell\left(\tilde{\boldsymbol{y}}_j, \boldsymbol{y}\right).
\end{aligned}    
\end{equation}
\end{proof}

\end{document}